\newcolumntype{M}[1]{>{\centering\arraybackslash}m{#1}}
\newtheorem{theorem}{Theorem}
\newtheorem{definition}{Definition}
\begin{document}
%
\title{Robust Multi-Modal Sensor Fusion: An Adversarial Approach}
%
%
%

\author{Siddharth Roheda$^{\star}$,
        ~Hamid~Krim$^{\star}$,
        ~and~Benjamin~S.~Riggan$^{\dagger}$.\\
        $ ^{\star}$North Carolina State University at Raleigh, NC, USA\\
        $ ^{\dagger} $University of Nebraska-Lincoln, Lincoln, NE, USA
        
}

\maketitle

\begin{abstract}

In recent years, multi-modal fusion has attracted a

	lot of research interest, both in academia, and in industry. Multimodal fusion entails the combination of information from a set of different types of sensors. Exploiting complementary information from different sensors, we show that  target detection and classification problems can greatly benefit from this fusion approach and result in a performance increase. To achieve this gain, the information fusion from various sensors is shown to require some principled
strategy to ensure that additional information is constructively used,
and has a positive impact on performance. We subsequently demonstrate the viability of  the proposed fusion approach by weakening the strong dependence on  the functionality of all sensors, hence introducing additional flexibility in our solution and lifting the severe limitation in unconstrained surveillance
settings with potential environmental impact.
Our proposed data driven approach to multimodal
fusion, exploits selected optimal features from an estimated  latent space of data across all modalities. This
hidden space is learned via a generative network conditioned on
individual sensor modalities. The hidden space, as an intrinsic
structure, is then exploited in detecting damaged sensors,
and in subsequently safeguarding the performance of the fused
sensor system. Experimental results show that such an approach
can achieve automatic system robustness against noisy/damaged sensors.

\end{abstract}

%

%
\IEEEpeerreviewmaketitle

\section{Introduction}
%
%
%
%

\IEEEPARstart{S}{ensor} fusion is known to broadly classify fusion techniques into three classes. Data level fusion is used when combining diverse raw data, to subsequently proceed with inference. Feature level fusion first extracts information from raw data, and then merges these features to make coherent decisions. Fusion at the decision level allows each sensor to reach its own individual decision (on the target identity), prior to an optimal combination of these decisions. Decision level fusion has received a lot of attention when exploiting heterogeneous modalities, as it allows each modality to have an independent feature representation, thus providing additional information. A review of classical fusion approaches can be found in \cite{survey1, survey2}, and a more recent alternative data-driven perspective is provided by \cite{wang2018fusing}.

The viability of many fusion approaches strongly  hinges on the functionality of all the sensors, making it a fairly restrictive solution. The severity of this limitation is even more pronounced in unconstrained surveillance settings, where the environmental conditions have a direct impact on the sensors, and close manual monitoring is difficult or even impractical. Partial sensor failure can hence cause a major drop in performance in a fusion system if timely failure detection is not performed. Furthermore, even if these sensors are successfully detected, the common adopted solution is to ignore the damaged sensor, with a potential negative impact on the overall resulting performance (i.e. relative to when all sensors are functional). We consider exploiting the prior information about the relationship between these sensor modalities, typically available from past observations  during training of the fusion system, so that the latter can safeguard a high performance. There has recently been an interest in such a transfer of knowledge. In \cite{JHoffHalluc}, the authors introduce hallucination networks to address a missing modality at test time by distilling knowledge into the network during training. Here, a teacher-student network is implemented using an $L^2$ loss term (hallucination loss) to train the student network. This idea has been further extended in \cite{GHintonDistill, VapnikPrivInfo, roheda2018cross}. In \cite{roheda2018cross} where a Conditional Generative Adversarial Network (CGAN) was used to generate representative features for the missing modality. Furthermore, \cite{SRastMdl:CW} shows that considering interactions between modalities can often lead to a better feature representation. Recent work in Domain Adaptation \cite{gopalan2011domain, zheng2012grassmann, ni2013subspace} addresses differences between source and target domains. In these works, the authors attempt to learn intermediate domains (represented as points on the Grassman manifold in \cite{gopalan2011domain, zheng2012grassmann} and by dictionaries in \cite{ni2013subspace}) between the source domain and the target domain.

\textbf{Our Contributions:} In this paper, we propose a robust sensor fusion algorithm, that can detect damaged sensors on the fly, and take the required steps to safeguard detection performance. We use a special case of the Event Driven Fusion technique recently proposed in \cite{rohedaEDF, rohedaEDF-jour}, and modify it in order to include reliability of individual sensors. This reliability measure is adaptive, and accounts for the sensor condition during implementation. We also propose a data driven approach for learning the features of interest which were handcrafted in \cite{rohedaEDF}. In addition, we learn a hidden latent space between the sensor modalities, and the optimal features for classification are driven by the existence of this hidden space. Furthermore, it also provides robustness against damaged sensors. This hidden space is learned via a generative network conditioned on individual sensor modalities. In contrast to \cite{roheda2018cross}, we do not require any target feature space in order to learn the optimal hidden space estimate. The hidden space is structured so that it can accommodate both \textit{shared and private} features of sensor modalities. We forego the use of a-priori knowledge about sensor damage adopted in \cite{rohedaEDF-jour}, and can hence detect damaged sensors based on deviations in the generated hidden space. 

\section{Related Work}

\subsection{Model Based Sensor Fusion}
\label{Model_Based_Fusion}
A model-based sensor fusion approach was proposed in  \cite{thesis, li2001convex, florea2007critiques}.In this work, two different sensor fusion models were explored, namely, Similar Sensor Fusion (SSF), and Dissimilar Sensor Fusion (DSF). The SSF model assumes that the independent sensors in the network are similar to each other (eg. 5 radars looking at the same object). Additional sensors in this case do not provide any new information, but can be used to confirm information from other sensors. This model attempts to find a fusion result which is most consistent with all the individual sensor reports. On the other hand, the DSF model assumes that  all sensors observe dissimilar characteristics of the target, and hence each sensor provides novel information about the target. An additional sensor in this case generates increased clarity on the target identity. These two models should be viewed as ``extreme cases" of decision level fusion. There are many practical cases in which the sensors are neither completely similar nor completely dissimilar. Event Driven Fusion \cite{rohedaEDF, rohedaEDF-jour} addresses this limitation and is discussed below.
\subsection{Event Driven Fusion}
\label{EDF}
Event Driven Fusion \cite{rohedaEDF, rohedaEDF-jour} looks at fusion under a different light, by combining occurrences of events to reach a probability measure for a target identity. Each sensor is said to make a decision on the occurrence of certain events that it observes, rather than making a decision on the target identity. This technique also explores the extent of dependence between features being observed by the sensors, and hence generates more informed probability distributions over the events.

Consider the set of objects/targets, $O = \{o_1, o_2, ..., o_I\}$. Let the $k^{th}$ feature from the $l^{th}$ sensor be $F_k^l$. Then, a mutually exclusive set of events, $\Omega_k^l = \{a_{kj}^l\}_{j=1,...,J_{kl}}$, is defined over the feature $F_k^l$. Here, $a_{kj}^l$ is the $j^{th}$ event on $F_k^l$ and is described as $a_{kj}^l: F_k^l \in [u_j, v_j)$, $u_j \in {\rm I\!R^+}$, $v_j \in {\rm I\!R^+}$, and $v_j > u_j$. A probability report is generated by the $l^{th}$ sensor for each of its features, $R_k^l = \{\Omega_k^l, \sigma_B(\Omega_k^l), P_k^l\}$, where, $\sigma_B(\Omega_k^l)$ is the Borel sigma algebra of $\Omega_k^l$, and $P_k^l$ is the set of probabilities over all the events in $\sigma_B(\Omega_k^l)$.  An object/target is then defined as $o_i \in \sigma_B(\Omega)$, where, $\Omega = \Omega_1^1 \times ... \times \Omega_k^l \times ... \times \Omega_{K_L}^L$. The joint probability in the product space is determined as a convex combination of a distribution with minimal mutual information and one with maximal mutual information. For events $\gamma_k^l \in \sigma_B(\Omega_k^l)$
\begin{equation}
	\begin{split}
	P_\Omega(\gamma_1^1,...,\gamma_k^l,...,\gamma_{K_L}^L) =  \rho.P_{\Omega_{\text{MAXMI}}}(\gamma_1^1,...,\gamma_k^l,...,\gamma_{K_L}^L) +\\ (1-\rho).P_{\Omega_{\text{MINMI}}}(\gamma_1^1,...,\gamma_k^l,...,\gamma_{K_L}^L), 
	\end{split}
\end{equation}  
where $\rho \in [0,1]$ is a pseudo-measure of correlation between the features. $\rho \approx 1$ when the features are highly correlated, and $\rho = 0$ when they are independent.
For any object defined as a combination of events in the product space $\Omega$, $o_i \in \sigma_B(\Omega)$, rules of probability can then be used to determine the object probability. For instance, in a 2-D scenario, an object may be defined as a combination of events $\gamma_1 \in \sigma_B(\Omega_1)$, and $\gamma_2 \in \sigma_B(\Omega_2)$. The combination defined in the product space, $\Omega = \Omega_1 \times \Omega_2$, may be of the form $o: \{\gamma_1 \wedge \gamma_2\}$ or $o: \{\gamma_1 \vee \gamma_2\}$. Given the joint probability, $P_\Omega$, rules of probability can be used to determine the fused object probability as follows:
\begin{itemize}
	\item $o: \{\gamma_1 \wedge \gamma_2\}: P^f(o) = P_\Omega(\gamma_1, \gamma_2)$
	\item $o: \{\gamma_1 \vee \gamma_2\}: P^f(o) = P_1(\gamma_1) + P_2(\gamma_2) - P_\Omega(\gamma_1, \gamma_2)$
\end{itemize}
Where, $P_1(\gamma_1)$ and $P_2(\gamma_2)$ are the marginal probabilities for detection of the events $\gamma_1$ and $\gamma_2$ as seen by sensors $1$ and $2$.

\subsection{Generative Adversarial Networks}
The Adversarial Network was first introduced by Goodfellow et al. \cite{GoodfellowGAN} in 2014. In this framework, a generative model is pitted against an adversary: the discriminator. The generator aims to deceive the discriminator by synthesizing realistic samples from some underlying distribution. The discriminator on the other hand attempts to discriminate between a real data sample and that from the generator. Both these models are approximated by neural networks. When trained alternatively, the generator learns to produce random samples from the data distribution which are very close to the real data samples. Following this, Conditional Generative Adversarial Networks (CGANs) were proposed in \cite{MirzaCGAN}. These networks were trained to generate realistic samples from a class conditional distribution, by replacing the random noise input to the generator by some useful information (see Figure \ref{CGAN}). Hence, the generator now aims to generate realistic data samples, given the conditional information. CGANs have been used to generate random faces, given facial attributes \cite{CNNFaceGen} and to also produce relevant images given text descriptions \cite{CGANimgtotxt}.  
\begin{figure}
	\centering
	\includegraphics[width=0.4\textwidth]{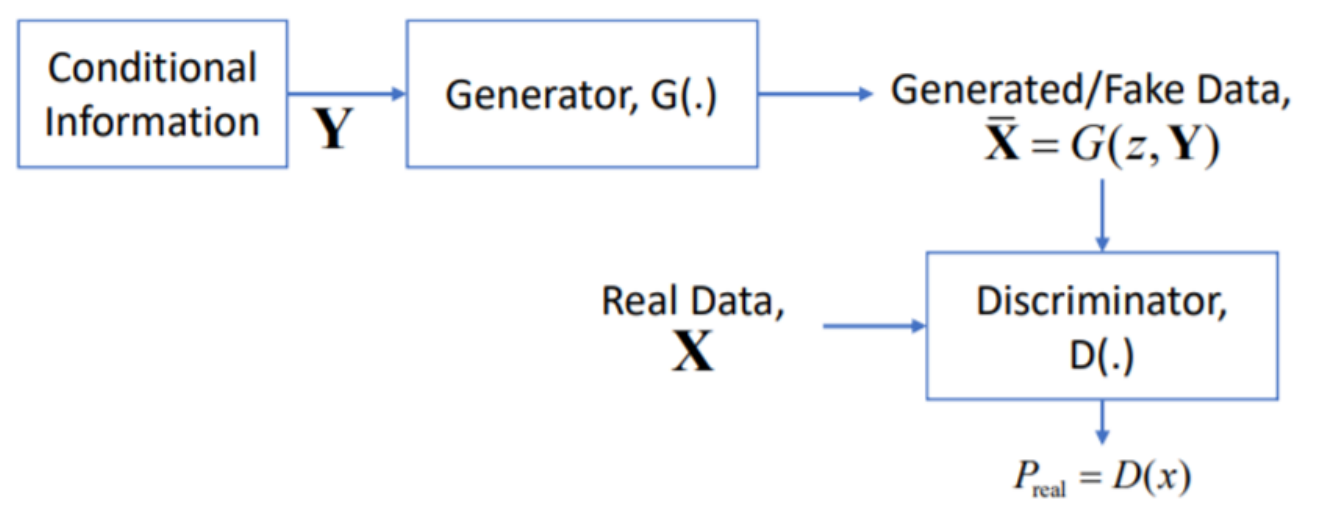}
	\caption{Conditional Generative Adversarial Networks}
	\label{CGAN}
\end{figure}

\section{Problem Formulation} \label{problemform}
\begin{figure*}[tbp]
	\centering
	\includegraphics[width=0.6\textwidth]{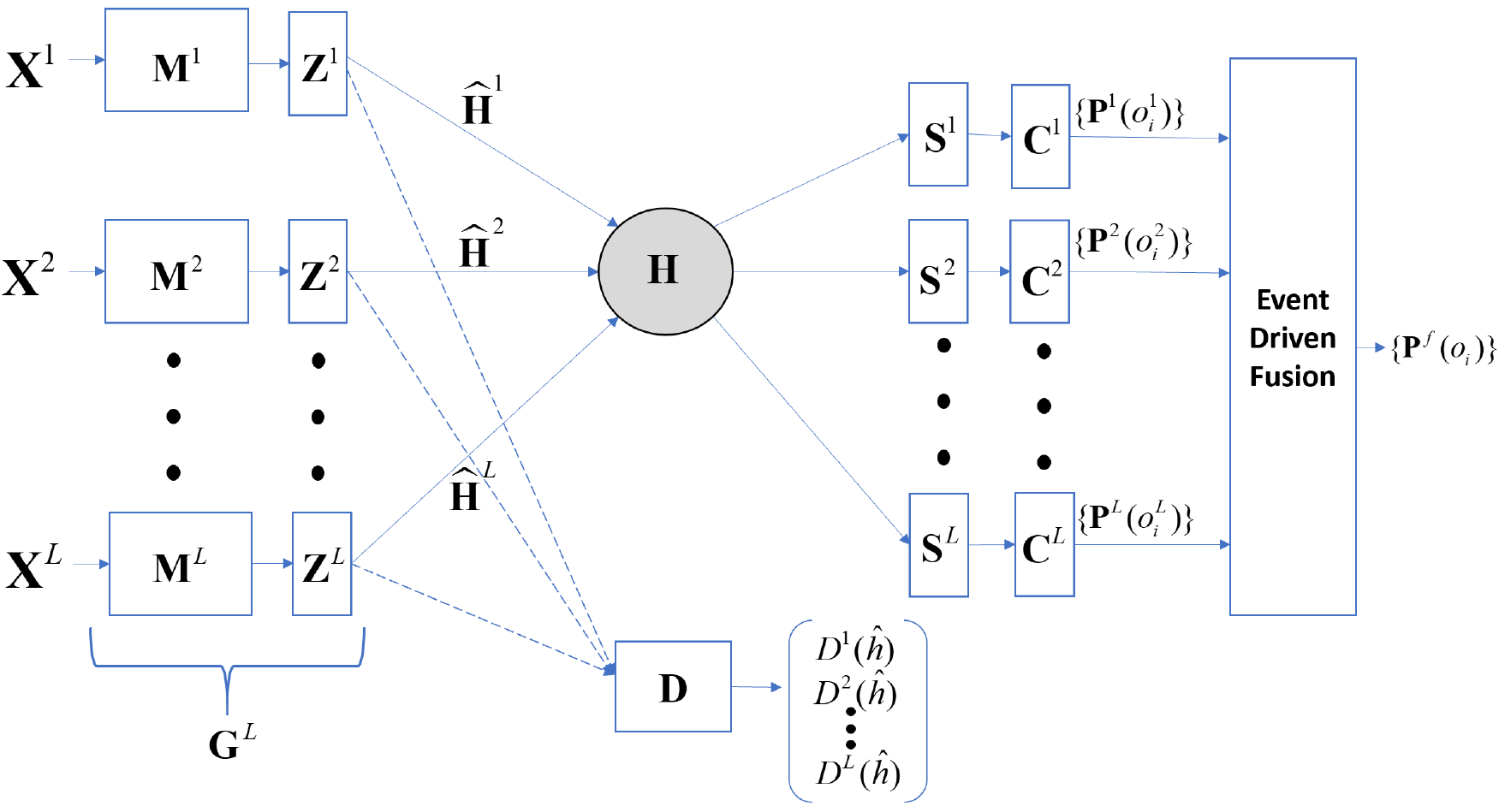}
	\caption{Block diagram of the proposed approach for sensor fusion}
	\label{FUSIONGANsBD}
\end{figure*}
Consider $L$ sensors surveilling an area of interest. We wish to detect/recognize targets, $O=\{o_1,o_2,...,o_I\}$ given the data collected by the sensors. Let the $n^{th}$ observation from the $l^{th}$ sensor be denoted by, $\bm{x^l_n} = \{\bm{x^l}_{\bm{n}_q}\}_{q = 1...d_l}$.
Let $\bm{X^l}_{d_l \times N} = \{ \bm{x_n^l} \}_{n = 1...N}$ be the set of $N$ observations, each of dimension $d_l$, made by the $l^{th}$ sensor.
	
We first seek to discover a hidden common space\footnote{In \cite{wang2018fusing}, such a space was referred to as an `information subspace'} $\bm{H}_{d_H \times N}$, of features captured by and hence present in the $L$ sensors, where $d_H$ is the dimension of the common subspace. This structure, unknown a priori, may include  both shared and non-shared features across the sensors. We refer to the non-shared features of specific sensors, as private. Given a hidden space, this thus amounts to being able to select from each sensor the optimal set of features for classification via a selection matrix, $\bm{S}^{\bm{l}}$,
\begin{gather}
	\label{FlHl}
	\begin{align}
	& \nonumber \forall l \in \{1,...,L\}, \\
	&\bm{F}^{\bm{l}}_{d \times N} = \bm{S^l}_{d \times d_H} \bm{H}_{d_H \times N}.
	\end{align}
\end{gather}
Since the hidden space represents the information shared by the sensors, there must also exist a mapping, $\bm{G^l}: \bm{X^l} \to \bm{H}$ such that,
\begin{gather}
\label{GANTx}
	\forall l \in \{1,...,L\}, \text{ } {\bm{\hat{H}^l}}_{d_H \times N} = \bm{G^l}(\bm{X^l}_{d_l \times N}) \approx \bm{H}_{d_H \times N}
\end{gather}
From Equations \ref{FlHl} and \ref{GANTx} we have,

\begin{gather}
\begin{align}
&\nonumber \forall l \in \{1,...,L\}, \\
&\bm{F^l}_{d \times N} = \bm{S^l}_{d \times d_H}[(\bm{G^l}(\bm{X^l}))_{d_H \times N}].
\end{align}
\end{gather}

The existence of such a hidden space makes it possible to detect damaged sensors, and safeguard the system performance against them. When the $l^{th}$ sensor is damaged, representative features for that sensor are reconstructed from the hidden space via the selection operator, $\bm{F^l = S^lH}$. Following the feature extraction, a linear classifier, $c_i^l(\bm{F^l}) = w_i^{l^T}\bm{F^l} + b_i^l$, is used to determine the classification score for the $i^{th}$ object as seen by the $l^{th}$ sensor. The probability of occurrence of this object is then determined as,
\begin{equation}
	P^l(o_i^l)=\frac{exp({w_i^{l^T}\bm{F^l}} + b_i^l)}{\sum_{m=1}^{I} exp({w_m^{l^T}\bm{F^l}} + b_m^l) }.
\end{equation}
Finally, given these probability reports, $R^l=\{P^l(o_i^l)\}$, the objective is to determine the fused probability report $R^f = \{P^f(o_i)\}$, which is achieved using a special case of Event Driven Fusion \cite{rohedaEDF, rohedaEDF-jour}. 

\section{Proposed Approach}
\label{Proposed_Method}
As discussed in the previous section, the hidden space, $\bm{H}$, can be estimated from the $l^{th}$ sensor observations as, $\bm{\hat{H}^l} = \bm{G^{l}}(\bm{X^l})$. 
The mapping $\bm{G^l}$ is approximated by a neural network and is realized as the generator of a Conditional Generative Adversarial Network (CGAN), that generates the estimate of the hidden space, $\bm{\hat{H}^l}$, while conditioned on the observations of the $l^{th}$ sensor. Hence, we will have $L$ generators that generate $L$ estimates of the hidden space.

The desired output of these generators is to generate the estimate, $\bm{\hat{H}^l}$, such that, $\forall l \in \{1,...,L\},$ $\bm{\hat{H}^l} = \bm{G^l(X^l)} \approx \bm{H}$. On the other hand, the discriminator attempts to correctly identify the source modality of the estimated hidden space. This results in a score assignment $D^l(\hat{h})$ for an estimate $\hat{h}$ generated by $G^l$. When updating the parameters for the $l^{th}$ generator, the hidden space estimates generated by all the other generators are assumed to be the target space, and the $l^{th}$ generator attempts to replicate these spaces, while at the same time attempting to generate an estimate that confuses the discriminator. 

The standard formulation for the Generative Adversarial Network \cite{GoodfellowGAN, MirzaCGAN} is known to have some instability issues \cite{WGAN}. Specifically, if the supports of the estimated hidden spaces are disjoint, which is highly likely when the inputs to the generators are coming from different modalities, a perfect discriminator is easily learned, and gradients for updating the generator may vanish. This issue was addressed in \cite{WGAN}, and solved by using a Wasserstein GAN. So, using the sensor observations as the conditional information in the WGAN formulation \cite{WGAN}, we have,
\begin{gather}
	\label{WGAN_eqn}
	\begin{align}
	&\nonumber \min_{\bm{G^l}}\max_{\bm{D}}  \sum_{l=1}^{L} V(\bm{G^l},\bm{D}),\\ 
	& \nonumber V(\bm{G^l},\bm{D}) = \sum_{\substack{m = 1 \\ m \neq l}}^{L} \{{\rm I\!E}_{\bm{G^{m}}(x^m) \texttildelow {\rm I\!P}_{\bm{H}}} [D^m(\bm{G^{m}}(x^m))] \\
	&\quad \quad \quad \quad \quad \quad \quad - {\rm I\!E}_{x^l \texttildelow {\rm I\!P}_{\bm{X^l}}} [D^m(\bm{G^{l}}(x^l))]\}.
	\end{align}
\end{gather}
The discriminator $\bm{D}$, in the above formulation is required to be compact and $K$-Lipschitz. This is done by clamping the weights of the discriminator to a fixed box (eg. $\theta_d \in [-0.01,0.01]$) \cite{WGAN}. 
The discriminator is updated once after every generator update. That is, for $L$ sensors, the discriminator is updated $L$ times for one update to the $l^{th}$ generator  (see Algorithm 1). While this causes the estimates to be close to each other, it does not guarantee that the generated hidden spaces share the same basis.

A similar idea toward finding a common representation between multiple modalities was very recently and independently proposed in \cite{CMGANs}. In this paper the authors attempt to find a common latent space between text and image modalities in order to perform image/text retrieval. Given a query from the image modality, the image is transformed into the latent common space, and the text with closest representation is retrieved.

In our work, we additionally ensure that the hidden space estimates share a common basis, enforcing as best can be done, a common subspace.  These hidden estimates are later exploited in detecting damaged sensors on the basis of their similar operation, thus making the common subspace constraint important. To that end, we exploit an important property of commutation between the operators that are responsible for transforming the data into the common subspace. Linear operators $\bm{A} \in {\rm I\!R^{nxn}}$ and $\bm{B} \in {\rm I\!R^{nxn}}$ are said to commute if, 
	
\begin{equation}
	[\bm{A}, \bm{B}] = \bm{A}\bm{B} - \bm{B}\bm{A} = 0.
	\label{comm_def}
\end{equation}

\begin{theorem} \label{commutation1} \cite{Frobenius_Commutation,commoneigbasis}
	If $\bm{A} \in {\rm I\!R^{nxn}}$ and $\bm{B} \in {\rm I\!R^{nxn}}$ are commuting linear operators, they share common eigenvectors.
\end{theorem}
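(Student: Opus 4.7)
The plan is to exploit commutativity to show that $\bm{B}$ preserves the eigenspaces of $\bm{A}$, and then to extract a common eigenvector by diagonalizing the restriction of $\bm{B}$ to a single $\bm{A}$-eigenspace.

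First, I would fix an eigenvalue $\lambda$ of $\bm{A}$ and let $E_\lambda = \ker(\bm{A} - \lambda \bm{I})$ denote its eigenspace; working over the complex field (or, equivalently, admitting complex eigenvectors for the real matrix $\bm{A}$) guarantees that at least one such $\lambda$ exists with $E_\lambda \neq \{0\}$. Next, I would verify the $\bm{B}$-invariance of $E_\lambda$: for any $v \in E_\lambda$, the commutation relation $[\bm{A},\bm{B}] = 0$ gives
\[
\bm{A}(\bm{B}v) \;=\; \bm{B}(\bm{A}v) \;=\; \bm{B}(\lambda v) \;=\; \lambda (\bm{B}v),
\]
so that $\bm{B}v \in E_\lambda$, and hence $\bm{B}$ restricts to a linear map $\bm{B}|_{E_\lambda}\!: E_\lambda \to E_\lambda$. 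Because $E_\lambda$ is a nonzero finite-dimensional space, this restriction must admit at least one eigenvector $w$ with some eigenvalue $\mu$, i.e., $\bm{B}w = \mu w$. Since $w$ already lies in $E_\lambda$, it simultaneously satisfies $\bm{A}w = \lambda w$, exhibiting the desired common eigenvector.

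The main obstacle is the gap between producing a \emph{single} common eigenvector (which the argument above delivers immediately) and the stronger statement that $\bm{A}$ and $\bm{B}$ admit a \emph{complete} common eigenbasis. The latter additionally requires both operators to be diagonalizable; under that hypothesis one iterates the invariance argument inside each $\bm{A}$-eigenspace to diagonalize $\bm{B}|_{E_\lambda}$ and then assembles the resulting bases across the direct-sum decomposition into $\bm{A}$-eigenspaces. A secondary subtlety is that the theorem is stated for real matrices: if $\bm{A}$ has genuinely complex-conjugate eigenvalues, the common eigenvectors produced above are a priori complex, and extracting real common invariant subspaces requires grouping conjugate pairs. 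In the paper's context, the commutator penalty on the learned operators $\bm{Z}^l$ is intended to drive them toward simultaneously diagonalizable forms, so these subtleties are benign for the downstream fusion application.
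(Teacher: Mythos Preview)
Your argument is correct and shares the same central computation as the paper: from $[\bm{A},\bm{B}]=0$ one gets $\bm{A}(\bm{B}v)=\lambda(\bm{B}v)$ for any $\bm{A}$-eigenvector $v$, so $\bm{B}$ preserves each $\bm{A}$-eigenspace. The paper's proof stops essentially there, observing only that $\bm{B}v_i$ is again an eigenvector of $\bm{A}$ with the same eigenvalue; it does not explicitly pass to the restriction $\bm{B}|_{E_\lambda}$ and extract an eigenvector of $\bm{B}$ inside $E_\lambda$, which is the step that actually produces a \emph{simultaneous} eigenvector. Your version supplies that missing step and is therefore more complete. Your additional remarks about the need for diagonalizability to upgrade to a full common eigenbasis, and about the real-vs-complex subtlety, are also well placed; the paper defers the first point to its Theorem~\ref{commutation2} and does not address the second at all.
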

\begin{theorem}\label{commutation2} \cite{Frobenius_Commutation, commoneigbasis}
	If $\bm{A} \in {\rm I\!R^{nxn}}$ and $\bm{B} \in {\rm I\!R^{nxn}}$ are commuting operators that are also individually diagonalizable, they share a common eigenbasis.
\end{theorem}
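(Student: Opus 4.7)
The plan is to upgrade Theorem~\ref{commutation1}, which only produces individual shared eigenvectors, into a statement about a full shared basis by exploiting the additional hypothesis of simultaneous diagonalizability. The natural strategy is to decompose the ambient space using the eigenspace decomposition of $\bm{A}$ and then to diagonalize $\bm{B}$ eigenspace-by-eigenspace.

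First I would write $\mathbb{R}^n = \bigoplus_{\lambda \in \mathrm{spec}(\bm{A})} E_\lambda$, where $E_\lambda = \ker(\bm{A} - \lambda \bm{I})$; the direct sum is valid because $\bm{A}$ is assumed diagonalizable. Next I would observe, as already noted in the proof of Theorem~\ref{commutation1}, that each $E_\lambda$ is $\bm{B}$-invariant: for any $v \in E_\lambda$, commutativity gives $\bm{A}(\bm{B}v) = \bm{B}(\bm{A}v) = \lambda(\bm{B}v)$, so $\bm{B}v \in E_\lambda$. Hence $\bm{B}$ restricts to a linear operator $\bm{B}|_{E_\lambda} : E_\lambda \to E_\lambda$.

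The key step, and the one I expect to be the main obstacle, is to argue that $\bm{B}|_{E_\lambda}$ is itself diagonalizable. Diagonalizability of $\bm{B}$ on $\mathbb{R}^n$ is equivalent to its minimal polynomial $m_{\bm{B}}(t)$ splitting into distinct linear factors. Since $E_\lambda$ is $\bm{B}$-invariant, the minimal polynomial of $\bm{B}|_{E_\lambda}$ divides $m_{\bm{B}}(t)$, and therefore also splits into distinct linear factors. Consequently $\bm{B}|_{E_\lambda}$ is diagonalizable, so there exists a basis $\mathcal{V}_\lambda$ of $E_\lambda$ consisting of eigenvectors of $\bm{B}|_{E_\lambda}$. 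Every vector in $\mathcal{V}_\lambda$ lies in $E_\lambda$, hence is automatically an eigenvector of $\bm{A}$ with eigenvalue $\lambda$.

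Finally I would assemble $\mathcal{V} = \bigcup_{\lambda \in \mathrm{spec}(\bm{A})} \mathcal{V}_\lambda$. Because the eigenspaces $E_\lambda$ are linearly independent and together span $\mathbb{R}^n$, the set $\mathcal{V}$ is a basis of $\mathbb{R}^n$; by construction every element of $\mathcal{V}$ is simultaneously an eigenvector of $\bm{A}$ and of $\bm{B}$, yielding the desired common eigenbasis. The only subtlety worth flagging explicitly in the write-up is the invariance of the minimal-polynomial criterion under restriction to invariant subspaces; everything else is a direct combination of Theorem~\ref{commutation1} with the eigenspace decomposition of $\bm{A}$.
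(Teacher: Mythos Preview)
Your proposal is correct and is in fact the standard rigorous argument. It differs genuinely from the paper's proof. The paper proceeds by asserting that individually diagonalizable matrices have $n$ distinct eigenvalues, so every eigenspace of $\bm{A}$ is one-dimensional; Theorem~\ref{commutation1} then forces each eigenvector of $\bm{A}$ to be an eigenvector of $\bm{B}$, and the eigenbasis of $\bm{A}$ serves for both. This is short, but the opening premise is not true in general (the identity matrix is diagonalizable with a single eigenvalue), so the paper's argument really only covers the non-degenerate case. Your route---decompose $\mathbb{R}^n$ into the eigenspaces of $\bm{A}$, use commutativity to get $\bm{B}$-invariance of each $E_\lambda$, and invoke the minimal-polynomial criterion to diagonalize $\bm{B}|_{E_\lambda}$---handles arbitrary eigenspace dimensions and is the complete proof. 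What the paper's version buys is brevity under the extra (unstated) assumption of simple spectrum; what your version buys is correctness in full generality, at the modest cost of appealing to the fact that the minimal polynomial of a restriction divides the global minimal polynomial.
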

If the operators $\bm{Z^1}, ..., \bm{Z^L}$ commute, they will share a common eigenbasis \cite{Frobenius_Commutation, commoneigbasis}. Furthermore, since the transformation, $\bm{Z^l}[\bm{M^l}(\bm{X^l})]$ lies in the range space of the operator, $\bm{Z^l}$, $\forall l, \bm{\hat{H}^l} = \bm{Z^l}[\bm{M^l}(\bm{X^l})]$ lie in a common subspace, due to the shared basis. While exact commutation cannot be guaranteed, we include a penalty term in the optimization to encourage the operators to commute, hence leading to operators that are `almost commuting'. Note that $\bm{Z^l}$ must be a square matrix for validly enforcing the commutation cost as per Equation \ref{comm_def}. By including this with the GAN loss, Equation \ref{WGAN_eqn} becomes, 
\begin{equation}
\label{GAN_1,comm}
\begin{split}
\min_{\bm{G^l}, \bm{Z^l}}\max_{\bm{D}}  \sum_{l=1}^{L} \{V(\bm{G^l},\bm{D}) + \gamma_2 \sum_{\substack{m=1\\m \neq l}}^{L}||\left[\bm{Z^l}, \bm{Z^m}\right]||^2\}.
\end{split}
\end{equation}
While the CGAN network is expected to eventually generate samples that lie in the same subspace as the target data, even in the absence of this auxiliary loss, we observe that its inclusion nevertheless improves the generator performance tremendously, in spite of the fact that the target space is ill defined. Experiments show that including the commutation term speeds up convergence, and also yields better hidden space estimates.
\subsection{Structure of the Hidden Space/Selection Operators}
In addition to ensuring that the hidden space estimates generated by different modalities lie in a common subspace, it is also important to structure the hidden space in a way that it not only contains information shared between the sensor modalities, but also contains private information of individual modalities. This is especially important for heterogeneous sensors as they may provide additional information along with some common information. This conforms with the notion that no two sensors are completely similar or dissimilar. This is equivalent to structuring $\bm{S^l}$ such that it selects features that are relevant to the $l^{th}$ modality, while ignoring others. Such a structure can be achieved by encouraging the columns of the selection matrix to be close to zero. If the $m^{th}$ column of $\bm{S^l}$ is zeroed out, then the $m^{th}$ feature in the hidden space, $\bm{H}$, does not contribute toward the $l^{th}$ modality. This allows us to find the latent space, $\bm{H}$, that naturally separates information shared between different modalities from which is private to each modality. We implement this by minimizing the $L_{\infty,1}$ norm, i.e., $\min_{\bm{S^l}} ||\bm{S}^{\bm{l}}||_{\infty,1}$, where,
\begin{equation}
	||\bm{S}^{\bm{l}}||_{\infty,1} = \sum_{j} \max_i (|s^{l}_{ij}|).
\end{equation}
This minimizes the sum of the maximum of each column in $\bm{S^l}$. Upon adding this term to the generator cost functional, we get,
\begin{equation}\label{GAN_1,comm,inf}
	\begin{split}
	\min_{\bm{G^l}, \bm{S^l}, \bm{Z^l}}\max_{\bm{D}}  \sum_{l=1}^{L} \{V(\bm{G^l},\bm{D}) + \gamma_1 ||\bm{S}^{\bm{l}}||_{\infty,1}\\ + \gamma_2 \sum_{\substack{m=1\\m \neq l}}^{L}||\left[\bm{Z^l}, \bm{Z^m}\right]||^2\}
	\end{split}	
\end{equation}

Notice that the above formulation has a trivial solution of setting $\bm{S^l} = \bm{0}$, and, $\bm{Z^l } = 0$ for all $l$. In order to avoid this solution, we also optimize the classification based on the selected features, $\bm{F^l}$, for each modality, via the minimization of the cross-entropy loss. This ensures that the learned features, $\bm{F^l}$, are optimal for object detection/recognition, which is only possible if $\bm{S^l} \neq 0$, and, $\bm{Z^l} \neq 0$.
Given the sensor observations, $\bm{X^l}$, and the classifier $\bm{C^l} = \{c_i^l\}$, the cross-entropy loss is computed as, 
\begin{equation}
C_{LOSS}^l(\bm{F^l}) = \sum_{n=1}^{N} \sum_{i=1}^{I} -y_{n_i} \log \sigma(c^l_i(\bm{f^l_n})),
\end{equation}
where, $Y_n = \{y_{n_i}\}$ is the ground truth for the $n^{th}$ sample, $\sigma$ is the soft-max function, and, $\bm{f_n^l} = \bm{S^l}[\bm{G^l}(\bm{x_n^l})]$.

Finally, the optimization task is,
\begin{gather}
\label{Final_Obj}
\begin{align}
&\nonumber \min_{\bm{G^l}, \bm{Z^l}, \bm{S^l}, \bm{C^l}}\max_{\bm{D}} \mathcal{L}(\bm{G^l}, \bm{D}, \bm{Z^l}, \bm{S^l}, \bm{C^l})\\
&\nonumber \mathcal{L}(\bm{G^l}, \bm{D}, \bm{Z^l}, \bm{S^l}, \bm{C^l}) = \sum_{l=1}^{L} \{V(\bm{G^l},\bm{D}) + \gamma_1 ||\bm{S^l}||_{\infty,1}\\ 
&\quad \quad \quad \quad \quad + \gamma_2 \sum_{\substack{m=1\\m \neq l}}^{L}||\left[\bm{Z^l}, \bm{Z^m}\right]||^2 + \gamma_3 C_{LOSS}^l({\bm{F^l}}), 
\end{align}
\end{gather}



where, $\gamma_1$, $\gamma_2$, and $\gamma_3$ are hyper-parameters that control the contribution of different terms toward the optimization.

The necessary updates to train these networks are summarized in the Appendix (Algorithm 1). 
This setup learns the optimal features for classification $\bm{F^l}$, while driven by the existence of the hidden space $\bm{H}$, such that $\bm{F^l} = \bm{S^lH}$. Due to the inclusion of the selection matrix $\bm{S^l}$, followed by the Classification Layer, note that hand-crafting of features as in \cite{rohedaEDF, rohedaEDF-jour}, is no longer required. The features of interest are learned and automatically selected by the optimization during the training phase. 
The effects of the different terms in Equation \ref{Final_Obj} on the determined hidden spaces are shown in Section \ref{Perf_Analysis} in Figures \ref{GAN_only}, \ref{GAN+Comm}, and \ref{GAN+Comm+Sel}. The block diagram for the proposed approach is summarized in Figure \ref{FUSIONGANsBD}.

\subsection{A Special Case of Event Driven Fusion}
\label{Sp_EDF}
In order to fuse the individual decisions from the sensors we use a special case of Event Driven Fusion. Instead of defining feature events for each object as in \cite{rohedaEDF}, an event in this case is the occurrence of the $i^{th}$ object as seen by the $l^{th}$ sensor, $o_i^l$. The $l^{th}$ classifier, $\bm{C^l} = \{c_i^l\}$, provides the corresponding probability given the test sample, $\bm{x_t^l}$, $P_t^l(o_i^l) = \frac{exp(c_i^l(\bm{f_t^l}))}{\sum_{m=1}^{I} exp(c_m^l(\bm{f_t^l}))}$, as previously discussed in Section \ref{problemform}. Each sensor report is now represented as, $R_t^l = \{P_t^l(o_i^l)\}$, and the fused probability of occurrence of the $i^{th}$ object as per the rules of Event Driven Fusion \cite{rohedaEDF} is determined as,
\begin{equation}
\begin{split} 
P_t^f(o_i) = P_t(o_i^1,o_i^2,...,o_i^L) = \rho.P_{\text{MAX MI}}(o_i^1,o_i^2,...,o_i^L)\\ + (1-\rho).P_{\text{MIN MI}}(o_i^1,o_i^2,...,o_i^L),
\end{split}
\end{equation} 
where $\rho$ is a pseudo-measure of correlation between the sensor modalities. For making an informed decision in favor of an object, this formulation assumes all the modalities to be equally reliable. This is not always true in practice, as certain sensors may provide more discriminative information than others, and are hence more reliable. It is thus important to weigh the various sensor decisions by a Degree of Confidence (DoC). The DoC in the decisions made by the $l^{th}$ sensor given the test sample, $\bm{x_t^l}$, is denoted by, $DoC_t^l \in [0,1]$, where $DoC_t^l = 0$ implies that the sensor observations do not provide any useful information about the target identity, and $DoC_t^l = 1$ implies that information provided by the sensor is highly discriminative with respect to target classification. The individual sensor reports, $R_t^l = \{P_t^l(o_i^l)\}$, are now redefined as,
\begin{equation}
R_t^{l'} = \{P_t^{l'}(o_1^l), P_t^{l'}(o_2^l),..., P_t^{l'}(o_I^l), P_t^{l'}(unc^l)\},
\end{equation}
where $P_t^{l'}(o_i^l) = DoC_t^l*P_t^l(o_i^l)$, and $P_t^{l'}(unc^l) = 1 - DoC_t^l$ is the probability that the $l^{th}$ sensor is uncertain about the target identity. The joint probability distribution for the new sensor reports is now rewritten as,
\begin{equation}
	\begin{split}
	 P_t(a^1,a^2,...,a^L) = \rho.P_{\text{MAX MI}}(a^1,a^2,...,a^L)\\ + (1-\rho).P_{\text{MIN MI}}(a^1,a^2,...,a^L),
	\end{split}
\end{equation}
where, $a^l \in \{o_1^l,o_2^l...,o_I^l,unc^l\}$, and the probability of occurrence of the $i^{th}$ target is computed as,
\begin{equation}
\begin{split}
P_t^f(o_i) = P_t((\bigwedge_{l=1}^{L} o_i^l) \bigvee_{m_1 = 1}^{L} (unc^{m_1} \bigwedge_{\substack{l=1 \\ l \neq m_1}}^L o_i^l) \\ \bigvee_{m_1,m_2 = 1}^{L} (unc^{m_1} \wedge unc^{m_2} \bigwedge_{\substack{l=1 \\ l \neq m_1 \\ l \neq m_2}}^L o_i^l) \\ ... \bigvee_{\substack{m_1,m_2\\...m_{L-1} = 1}}^L (\bigwedge_{j=1}^{L-1} unc^{m_j} \bigwedge_{\substack{l = 1 \\ \forall j, l \neq m_j}}^L o_i^l)),
\end{split}
\end{equation}
with a degree of confidence, $DoC_t^f = 1 - P_t(\bigwedge_{l=1}^L unc^l)$, in the fused decision. 

For a 2D scenario (i.e. two sensors), the probability of occurrence of the $i^{th}$ target is, 
\begin{equation}
P_t^f(o_i) = P_t((o_i^1 \wedge o_i^2) \vee (unc^1 \wedge o_i^2) \vee (o_i^1 \wedge unc^2)),
\end{equation}
with a degree of confidence, $DoC^f = 1 - P(unc^1 \wedge unc^2)$, in the fused decision. This formulation takes into account the potential uncertainties in the decisions made by individual sensors. Figure \ref{JD_2D} shows the joint distribution to be determined in a 2D case.

\begin{figure}
\centering
\includegraphics[width=0.4\textwidth]{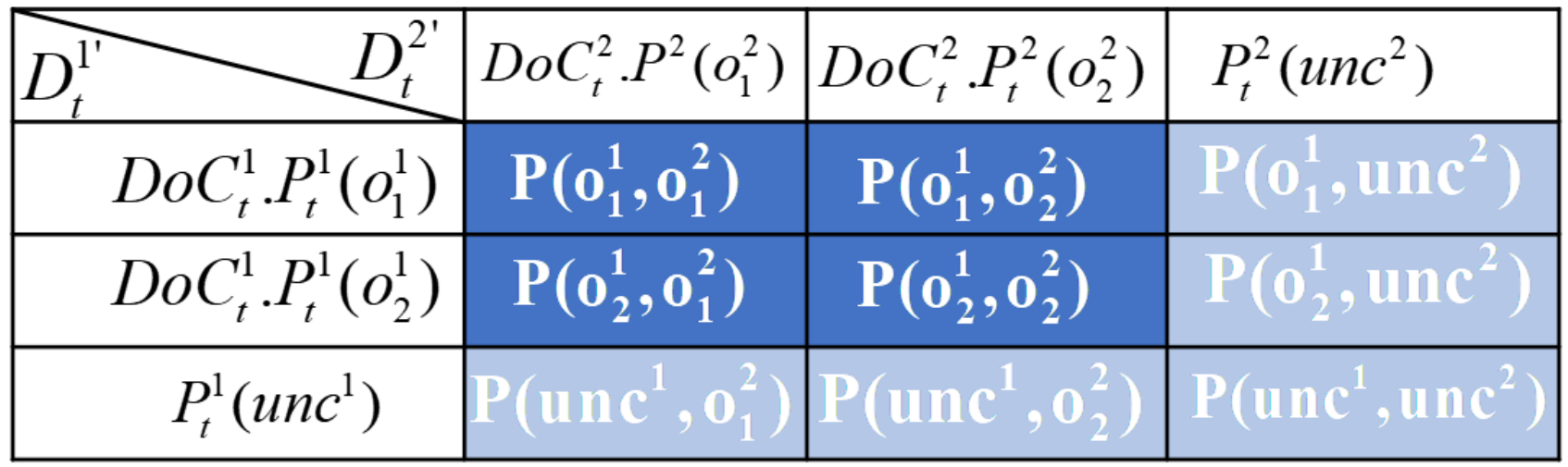}
	\caption{Fused joint probability distributions when accounting for the uncertain event. The joint distribution is determined following the rules of Event Driven Fusion.}
\label{JD_2D}
\end{figure}

The degree of confidence in the $l^{th}$ sensor for classification of the test sample, $\bm{x_t^l}$, is determined as, 
\begin{equation}
DoC^l_t = (1-p_D(\bm{\hat{h}_t^l})).Acc_{train}^l,
\label{Adaptive_DoC}
\end{equation}
where, $p_D(\bm{\hat{h}_t^l})$ is the probability of damage for the $l^{th}$ sensor given the test sample, $\bm{x_t^l}$ (discussed in detail in the next subsection), and $Acc_{train}^l$ is the training accuracy of the $l^{th}$ sensor. The training accuracy in Equation \ref{Adaptive_DoC} represents prior information about the discrimination power of the sensor, while $(1 - p_D(\bm{\hat{h}_t^l}))$ represents the sensor condition at the time of operation/testing.

\subsection{Sensor Failure Detection}
\label{SenorFailureDet}
Since $\bm{H}$ is designed so that it be common for all the sensors, it may be used to detect a damaged sensor.
\subsubsection{Cross-Sensor Tracking}
The estimate $\bm{\hat{H}^m}$, based on erroneous observations, will significantly deviate from the estimates from normal observations (i.e. sensors that are not damaged), $\bm{\hat{H}^l}, l \neq m$. This allows detection of damage in a sensor during the testing phase. The $m^{th}$ sensor is said to be damaged if,
\begin{gather}
\begin{align}
&\nonumber \quad \quad \text{ } \forall j,l \neq m,\\
&\nonumber \quad \quad \text{ } \sum_{\substack{l=1\\ l \neq m}}^{L} \mathcal{I}(||\bm{\hat{h}_t^m} - \bm{\hat{h}_t^l}||^2 > T) \geq \begin{cases}
\frac{L-1}{2},\text{ if } L \text{ is odd},\\
\frac{L}{2} - 1,\text{ if } L \text{ is even},
\end{cases}\\
&\text{And,  } \sum_{\substack{j,l=1\\ j,l \neq m}}^{L} \mathcal{I}(||\bm{\hat{h}_t^j} - \bm{\hat{h}_t^l}||^2 < T) \geq \begin{cases}
\frac{L-1}{2},\text{ if } L \text{ is odd},\\
\frac{L}{2} - 1,\text{ if } L \text{ is even},
\end{cases}	 
\end{align} 
\end{gather}
where, $\mathcal{I}(.)$ is the indicator function, $T$ is a threshold value determined from the training data, and $\bm{\hat{h}_t^l}$ is the estimated hidden space given the observation $\bm{x_t^l}$. This equation considers a sensor damaged when the distance between the hidden spaces of more than half the sensors in the network is less than some threshold $T$, while the distance of the hidden space of the sensor in question is more than $T$. 
The limitation with this approach is that we can only detect up to $\frac{L-1}{2} / \frac{L}{2} - 1$ damaged sensors. 
\subsubsection{Hierarchical Clustering}
A faulty sensor can also be detected by comparing the hidden space generated by the test sample with that generated from training data. One way to proceed is by clustering functional sensor observations (from training data), and verify whether the hidden space estimate generated by the test sample can be associated to any of these clusters. The concatenation of the training data, $\hat{\bm{H}}_{d \times LN}^C = \{ \hat{\bm{H}}^1, \hat{\bm{H}}^2, ..., \hat{\bm{H}}^L \}$, is first used to construct a clustering tree based on an Agglomerative approach (see Algorithm 2 in Appendix). The probability of damage of the $l^{th}$ sensor is then computed as,
\begin{equation}
	p_D(\bm{\hat{h}_t^l}) = \frac{d_{lev}}{\max_v d_v},
\end{equation}
where, $d_v$ is the cut-off distance at clustering level $v$, and,  
\begin{gather}
	\nonumber lev = arg\min v\\
	s.t. \exists j \in \{1,...,J_{lev}\}, \text{ } \bm{\hat{h}_t^l} \in Z_{lev}^j,
\end{gather}
$v =1,...,V$ are the clustering levels, $J_v$ is the number of clusters at level $v$, and $Z_v^j$ is the $j^{th}$ cluster at level $v$. This is a measure of how quickly the hidden space estimate, $\bm{\hat{h_t^l}}$, can be clustered with the training data.
 
The $l^{th}$ sensor is said to be damaged if, 
\begin{equation}
p_D(\bm{\hat{h}^l_t}) > T,
\label{Treshold_eq} 
\end{equation}
where, $T$, is a threshold value which will depend on the dataset, types of sensors, SNR, etc. In our evaluation, we compute the optimal thresholds at different SNRs for the training data, and these thresholds are later used during the testing phase in order to determine the state of a sensor. This probability measure is also used in Equation \ref{Adaptive_DoC}, in order to adapt the DoC based on the sensor condition in a functional mode.
If the sensor is damaged, the representative features are generated using the selection matrix as,
\begin{equation}
	\bm{\hat{f}_t^l} = \bm{S^l}\left(\frac{\sum_{m \in \Gamma} DoC_t^m.\bm{\hat{h}_t^m}}{\sum_{m \in \Gamma} DoC_t^m}\right),
	\label{Reconstruction}
\end{equation}
where, $\Gamma$ is the set of working sensors.

\begin{figure}
	\centering
	\includegraphics[width=0.45\textwidth]{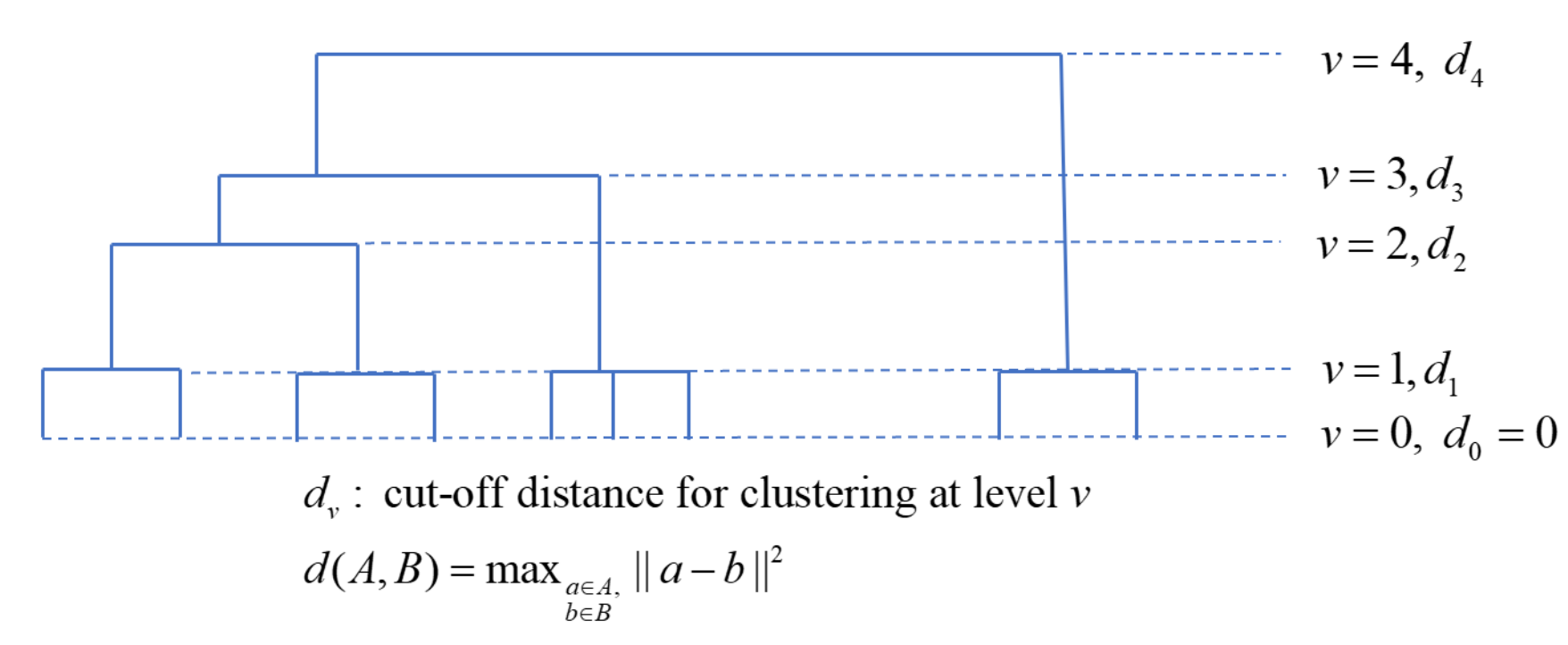}
	\caption{A clustering tree created using Hierarchical Clustering (Agglomerative Clustering)}
\end{figure}

\section{Experiments and Results}
We validate our proposed approach by running experiments on two different datasets.
\begin{figure}[h!]
	\centering
	\subfloat[]{\includegraphics[width=0.45\textwidth]{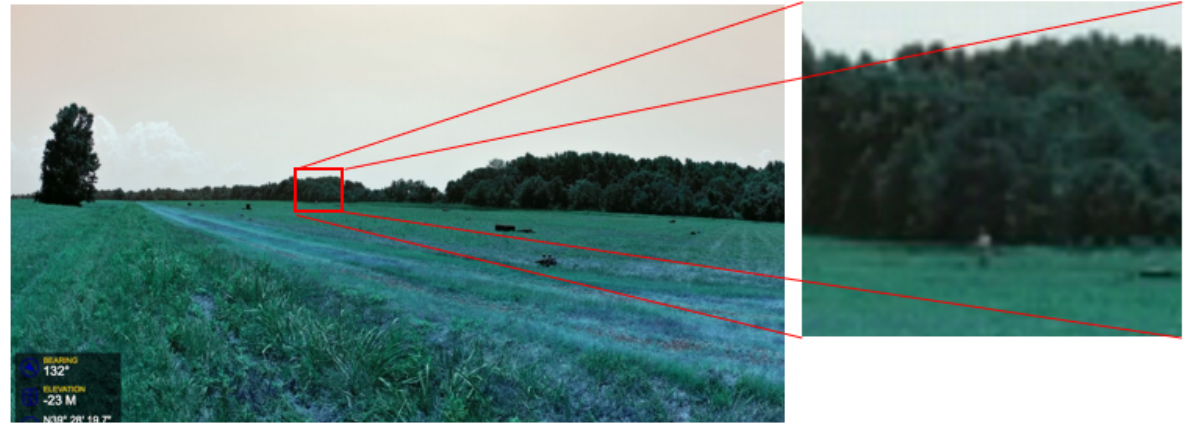}}\hfill
	\subfloat[]{\includegraphics[width=0.4\textwidth]{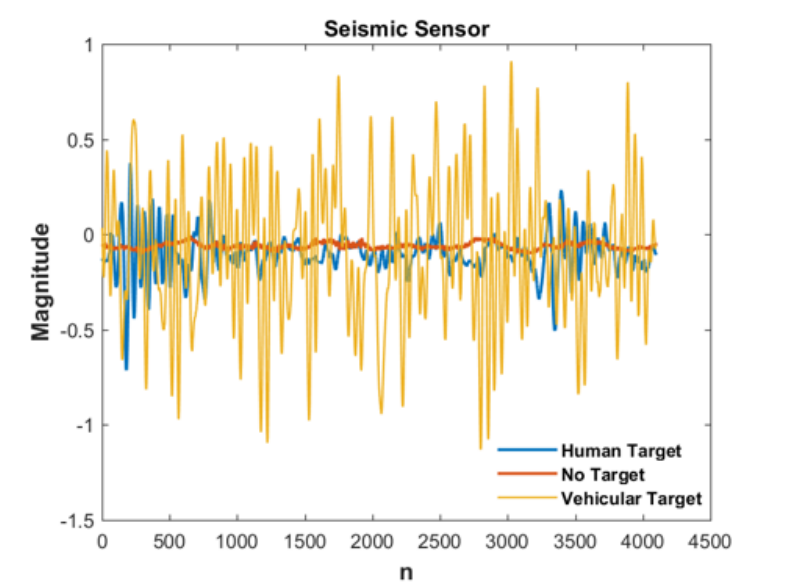}}\hfill
	\subfloat[]{\includegraphics[width=0.4\textwidth]{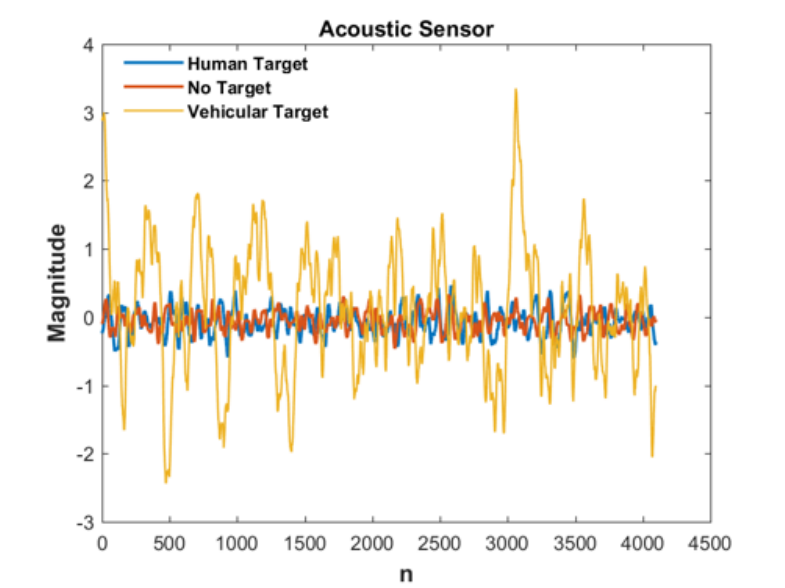}}\hfill
	\caption{(a): Sample video frame with a human target, (b): Sample seismic sensor observations for human target, vehicular target, and no target cases, (c): Sample acoustic sensor observations for human target, vehicular target, and no target cases.}
	\label{data_samplesARL}
\end{figure}

\begin{figure}[h!]
	\centering
	\includegraphics[width=0.4\textwidth]{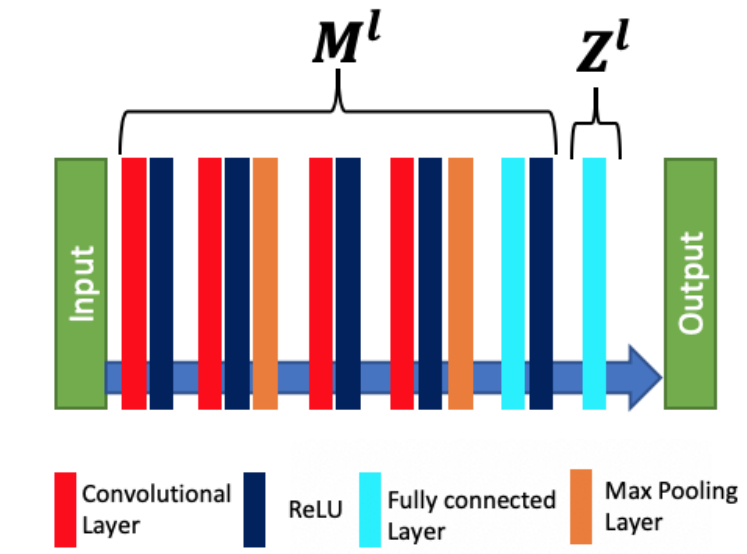}
	\caption{Generator Network used for each modality}
	\label{Generator_Structure}
\end{figure}

\begin{figure*}[tbp]
	\centering
	\subfloat[Dataset-1]{\includegraphics[width = 0.48\textwidth,height=0.18\textheight]{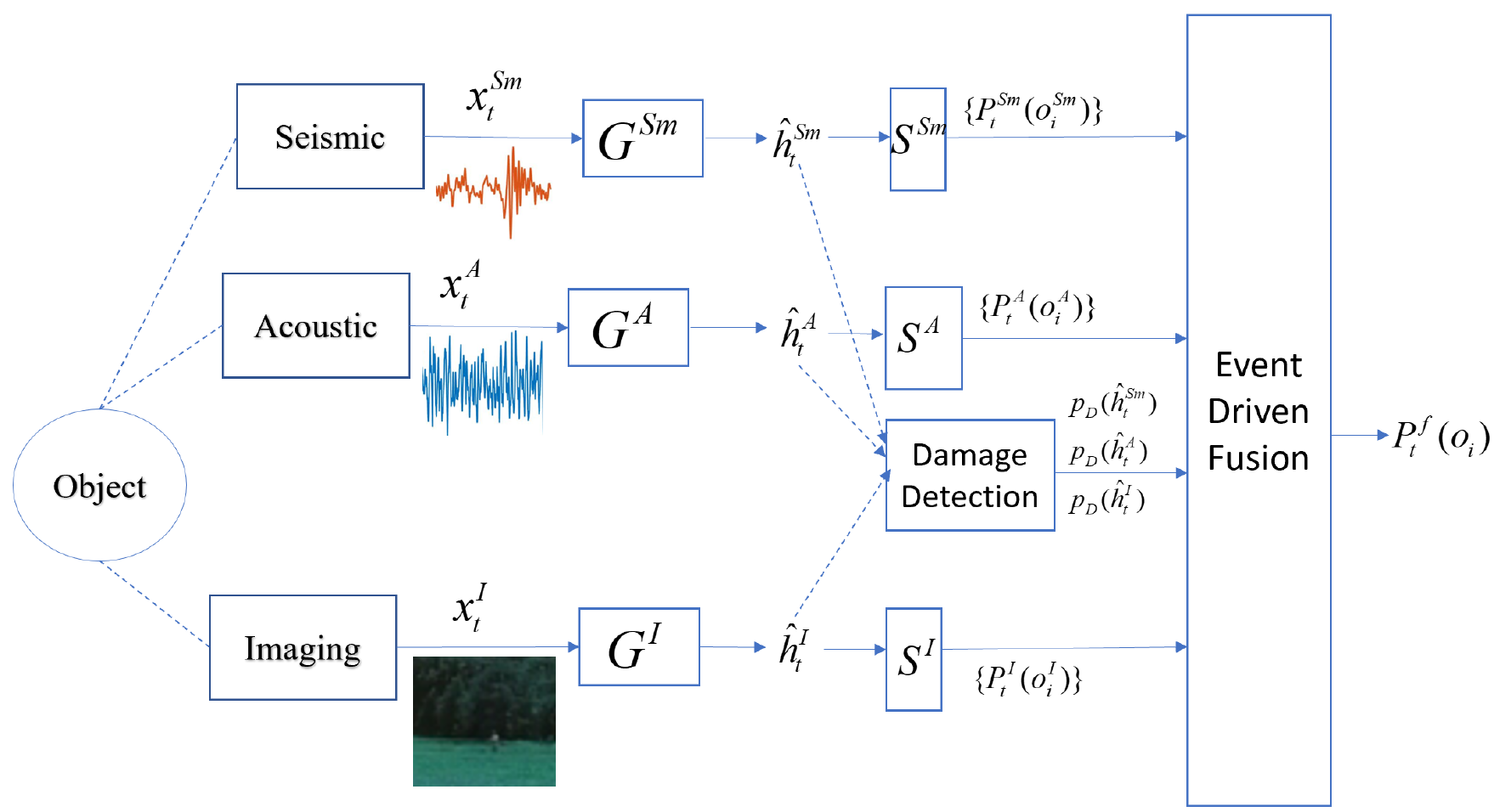}}\hfill
	\subfloat[Dataset-2]{\includegraphics[width=0.48\textwidth, height=0.18\textheight]{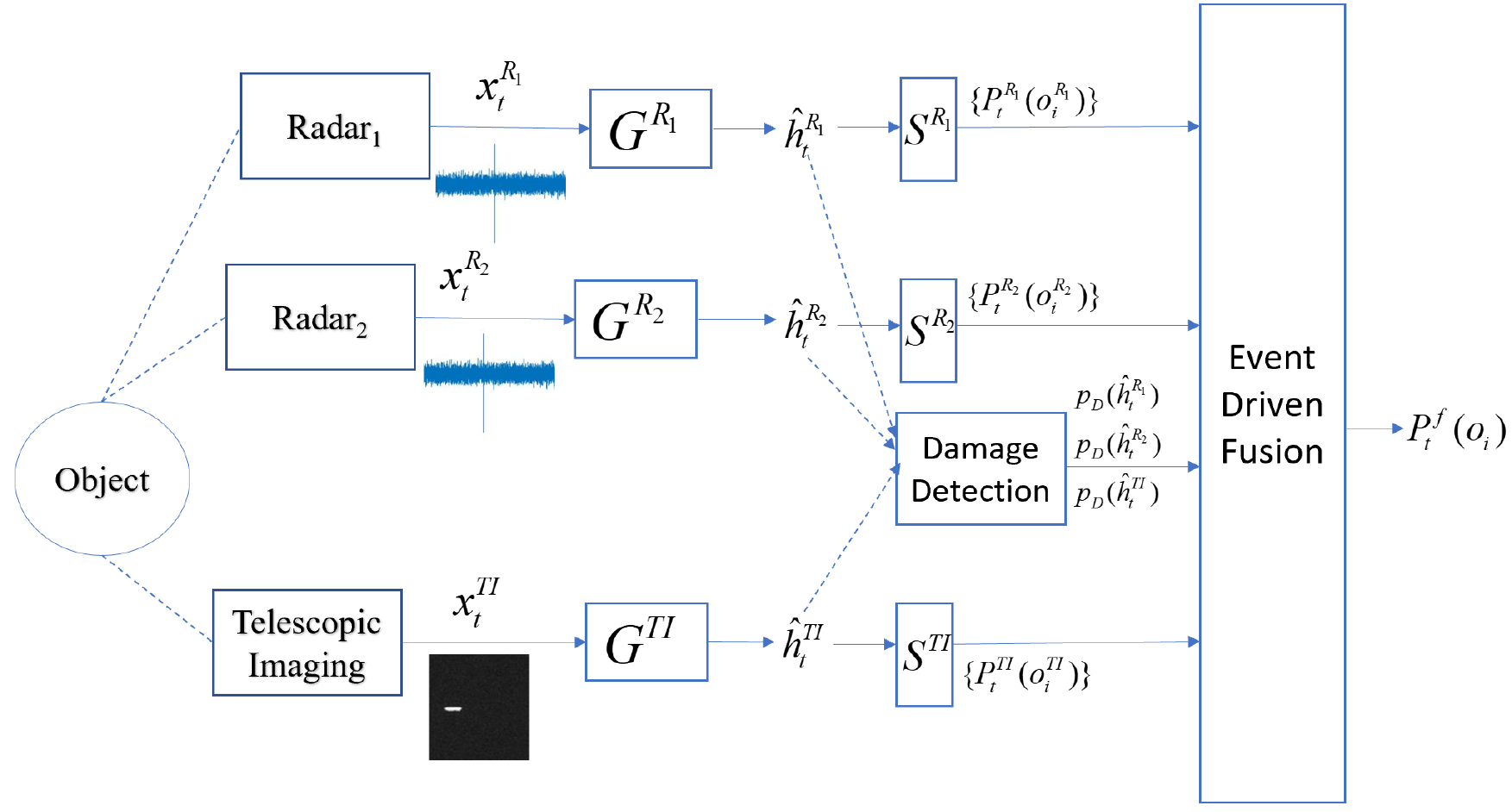}}
	\caption{Block Diagram of Implementation}
	\label{BD}
\end{figure*}
The first dataset we use is pre-collected data from a network of seismic, acoustic, and imaging sensors deployed in a field, where people/vehicles were walking/driven around in specified patterns. Details about this sensor setup and experiments can be found in \cite{nabritt2015personnel}. This dataset has been previously used for target detection in \cite{rohedaEDF, roheda2018cross, lee2017accumulative, ARL_paper_SA}. Here, we use this dataset to classify between human targets, vehicular targets, and no targets. Some data samples from the sensors can be seen in Figure \ref{data_samplesARL}. This will be referred to as `Dataset-1' in the following discussions.

For our second experiment, we select two sensors, namely a Radar sensor and a telescopic optical sensor, for one (latter) of which we have acquired real data. For technical reasons, our Radar measurements were never co-measured with the optical data, and so we use MATLAB Simulink in order to simulate the radar responses. Both sensors are ideally synchronized when observing a given target, which in our case, can be any object in outer space, such as satellite or space debris. 
Each generated radar signal over one second is correlated with two telescopic images. Samples for objects with different velocities, cross-sections, ranges, and aspect-ratios are then generated. 

Object classes are defined in the same way as in \cite{rohedaEDF-jour}, based on events on the feature values. Note that these events are no longer required for training purposes, and are only used for the purposes of labeling the data.
For the radar, we use the features, velocity ($v$), cross-section ($cs$), and range ($r$), and the events are defined as,
\begin{gather}
\nonumber a_1^v: 0 \leq v \leq 10 \text{ }m/s,\text{ } a_2^v: 15\text{ }m/s \leq v \leq 35\text{ }m/s,\\
\nonumber a_1^r: 0 < r \leq 300 \text{ }m,\text{ }a_2^r: 300\text{ }m < r, \\
a_1^{cs}: 0 < cs \leq 20 \text{ }m^2, \text{ }a_2^{cs}: 15\text{ }m^2 \leq cs \leq 50\text{ }m^2.
\end{gather}
Note that the events $a_i^v, a_i^r,$ and $a_i^{cs}$ are defined in the same way as $a_{kj}^l$ in Section \ref{EDF}. 
From the telescopic imaging sensor, the features, displacement ($d$) and aspect ratio ($AR$) define the following events,
\begin{gather}
\nonumber a_1^d: 0 \leq d \leq 60 \text{ }pixels,\text{ }a_2^d: 90\text{ }pixels \leq d \leq 210\text{ }pixels,\\
a_1^{ar}: 0 < AR \leq 1.5, \text{ }a_2^{ar}: 1.5 < AR.
\end{gather}
Furthermore, the objects for classification are defined in terms of these events as,
\begin{gather}
o_1: \{a_1^r \wedge [(a_2^v \wedge a_2^d) \vee (a_2^{cs} \vee a_2^{ar})]\}\\
o_2: \{a_1^v \wedge a_1^d \wedge a_2^r \wedge a_1^{cs} \wedge a_1^{ar}\}
\end{gather} 
This will be referred to as `Dataset-2' in the following discussions. 

\subsection{Implementation Details}
Figure \ref{BD}-(a) shows the detailed block diagram for implementation of the discussed approach on Dataset-1. As noted earlier we no longer require handcrafted features and event definitions as in \cite{rohedaEDF, rohedaEDF-jour}, since we let the generative structure (see Figure \ref{BD}) guide the learning of features. Similarly, the block diagram for implementation of this system on Dataset-2 can be seen in Figure \ref{BD}-(b). 
The output of the $l^{th}$ generator, $\bm{G^l}$, for a test sample, $\bm{x_t^l}$, is a $d_H$-dimensional estimate of the hidden space, $\bm{\hat{h}^l_t}$. This hidden space is also used to detect potential damages to the sensors deployed. The optimal features $\bm{f_t^l}$, are subsequently selected by each sensor for making decisions on target identities via the selection matrix $\bm{S^l}$. The decisions are then fed into the fusion system which synthesizes a decision on the basis of the rules discussed in Section \ref{Sp_EDF}.

The generator network uses 1-D convolutions in case of seismic/acoustic and radar modalities, and 2-D convolutions in case of the imaging sensors. We use a 6 layered Neural Network, with 2x2 max pooling layer after the $2^{nd}$ and $4^{th}$ layer. ReLU activation is applied after every convolutional layer. The first 4 layers are convolutional, while the last two are fully connected. The first two convolutional layers use a filter size of 5 and the next two use a filter size of 3. 
The first fully connected layer is used to transform the output of the convolutional layers into a $d_H$ dimensional representation. All the layers preceding the final fully connected layer approximate the mapping $\bm{M^l}$, while the final layer approximates the operator $\bm{Z^l}$, and transforms the data into a common subspace.  


The discriminator is implemented as a 3 layered fully connected network. Each layer preceeding the final layer transforms the $d-$dimensional input to $\frac{d}{2}$ and applies ReLU activation. The final layer transforms the features to a $L-$dimensional vector which represents the predictions from the discriminator network.

In determining the dimension of the hidden space, we search over values ranging from $d=50$ to $d=5000$. For Dataset-1 we find the best performance at $d=500$, while for Dataset-2, $d=700$. In both cases we observe that at best performance, $d << d_l$.A search over $[10^{-5}, 10]$ was performed to select the optimal $\gamma_1$, $\gamma_2$, and $\gamma_3$. For Dataset-1 the set of parameters used were $\gamma_1= 10^{-3}, \gamma_2=10^{-4}, \gamma_3=1$ and $\gamma_1=10^{-3}, \gamma_2=10^{-3}, \gamma_3=1$ for Dataset-2. As mentioned in Section \ref{SenorFailureDet}, the value of the threshold $T$ will depend on the SNR of the signal as well as the sensor type. The optimal threshold for various sensors is computed based on the validation set during training. The SNR vs Threshold curve for the various sensors is shown in Fig. \ref{Thresholds}. 
\begin{figure}[h!]
	\centering
	\subfloat[]{\includegraphics[width=0.4\textwidth, height=0.22\textheight]{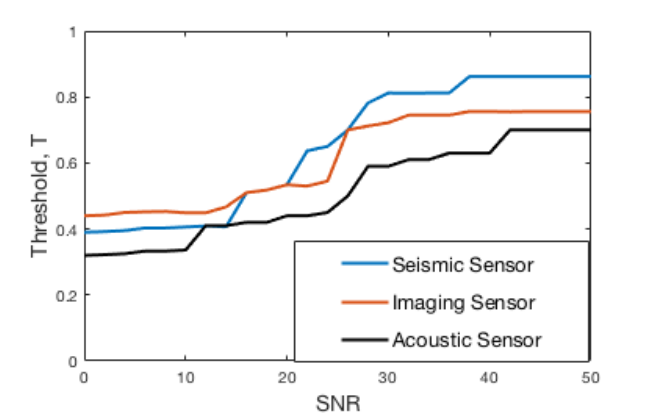}}\hfill
	\subfloat[]{\includegraphics[width=0.4\textwidth, height=0.22\textheight]{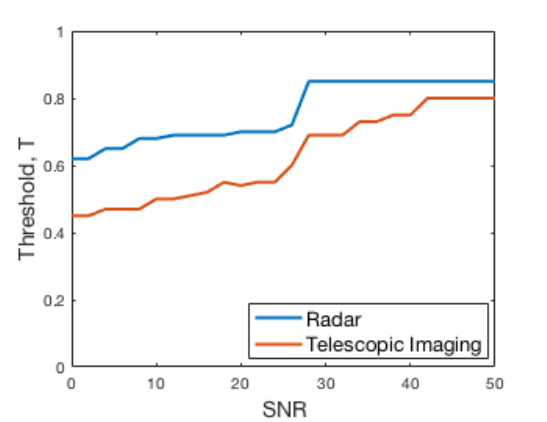}}\hfill
	\caption{Selected Thresholds for various sensors in (a): Dataset-1, (b): Dataset-2.}
	\label{Thresholds}
\end{figure}

\subsection{Performance Analysis}
\label{Perf_Analysis}
\begin{table}[htbp]
	\caption{Performance Comparison for both Datasets}
	\label{acctable_ARL}
	\begin{center}
		\begin{tabular}{|M{4.3cm}|M{1.7cm}|M{1.7cm}|}
			\hline
			\textbf{Method} & \textbf{Accuracy (Dataset-1)} &  \textbf{Accuracy (Dataset-2)}\\
			\hline
			Seismic Sensor & 93.62 \% & -\\
			\hline
			Acoustic Sensor &  68.71 \% & -\\
			\hline
			Imaging Sensor &  90.33 \% & - \\
			\hline
			Radar Sensor 1 & - & 89.33 \%\\
			\hline
			Radar Sensor 2 &  - & 86.73 \%\\
			\hline
			Telescopic Imaging Sensor &  - & 83.57 \%\\
			\hline
			Feature Concatenation &  88.13 \% & 86.45 \%\\
			\hline
			Dissimilar Sensor Fusion  &  91.61 \% & -\\ 
			\hline
			Similar + Dissimilar Sensor Fusion & - & 89.63 \%\\
			\hline
			Dempster-Shafer Fusion & 88.77 \% & 87.30 \%\\
			\hline
			Event Driven Fusion (Without using GAN structure) \cite{rohedaEDF, rohedaEDF-jour} & 92.04 \% & 90.36 \%\\
			\hline 
			Canonical Correlation Analysis \cite{ARL_paper_SA} & 86.64 \% & 85.69 \%\\
			\hline
			Discriminant Analysis & 91.33 \% & 88.21 \%\\
			\hline
			Dictionary Learning \cite{DL_paper} & 94.46 \% & 93.77 \% \\
			\hline
			Hidden Space Generated by GAN & 95.79 \% & 91.13 \%\\
			\hline
			\textbf{Event Driven Fusion + GAN} & \textbf{97.79 \%} & \textbf{95.34 \%}\\
			\hline
		\end{tabular}
	\end{center}
\end{table}
\begin{figure*}[tbp]
	\centering
	\subfloat[]{\includegraphics[width=0.33\textwidth]{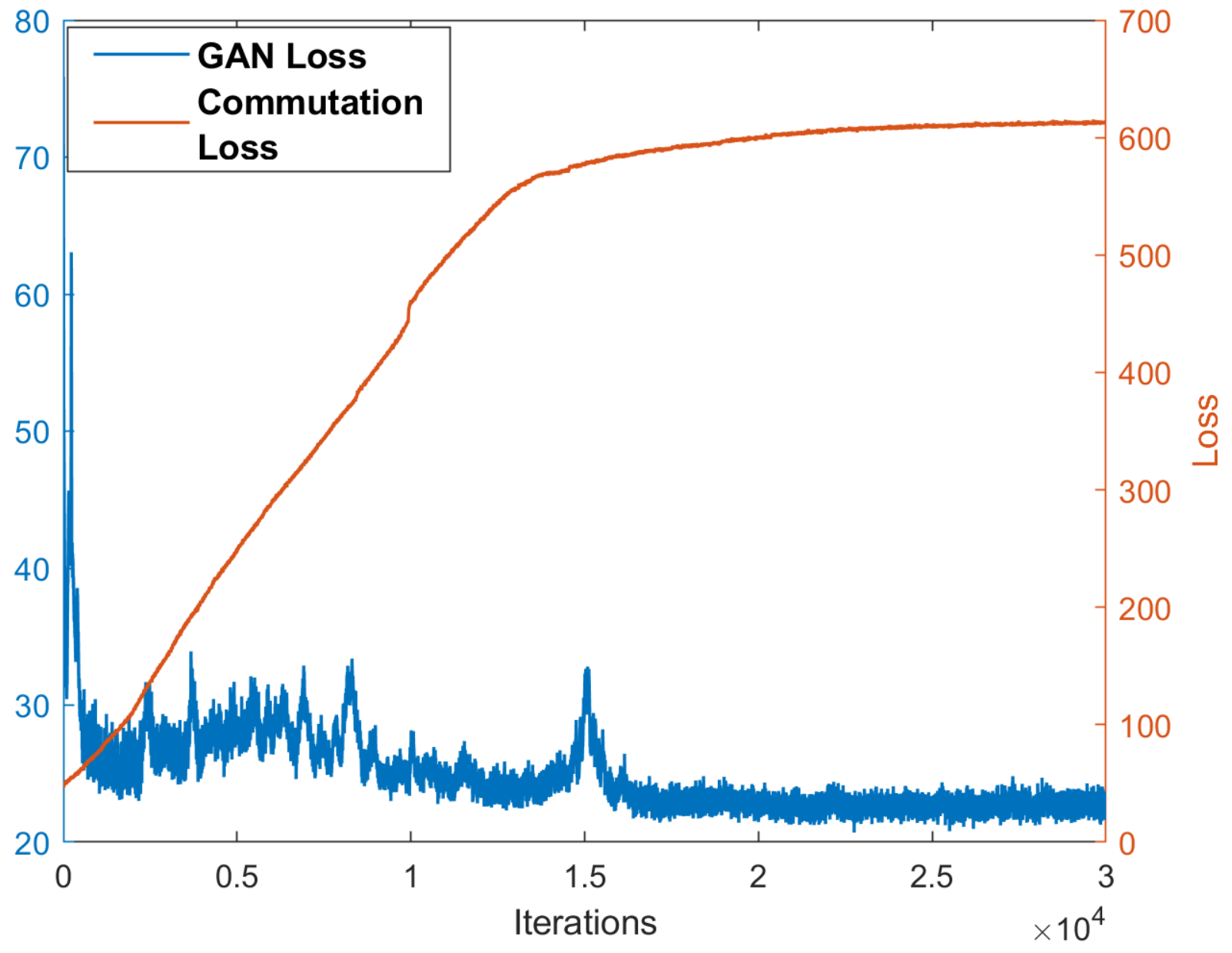}}\hfill
	\subfloat[]{\includegraphics[width=0.33\textwidth]{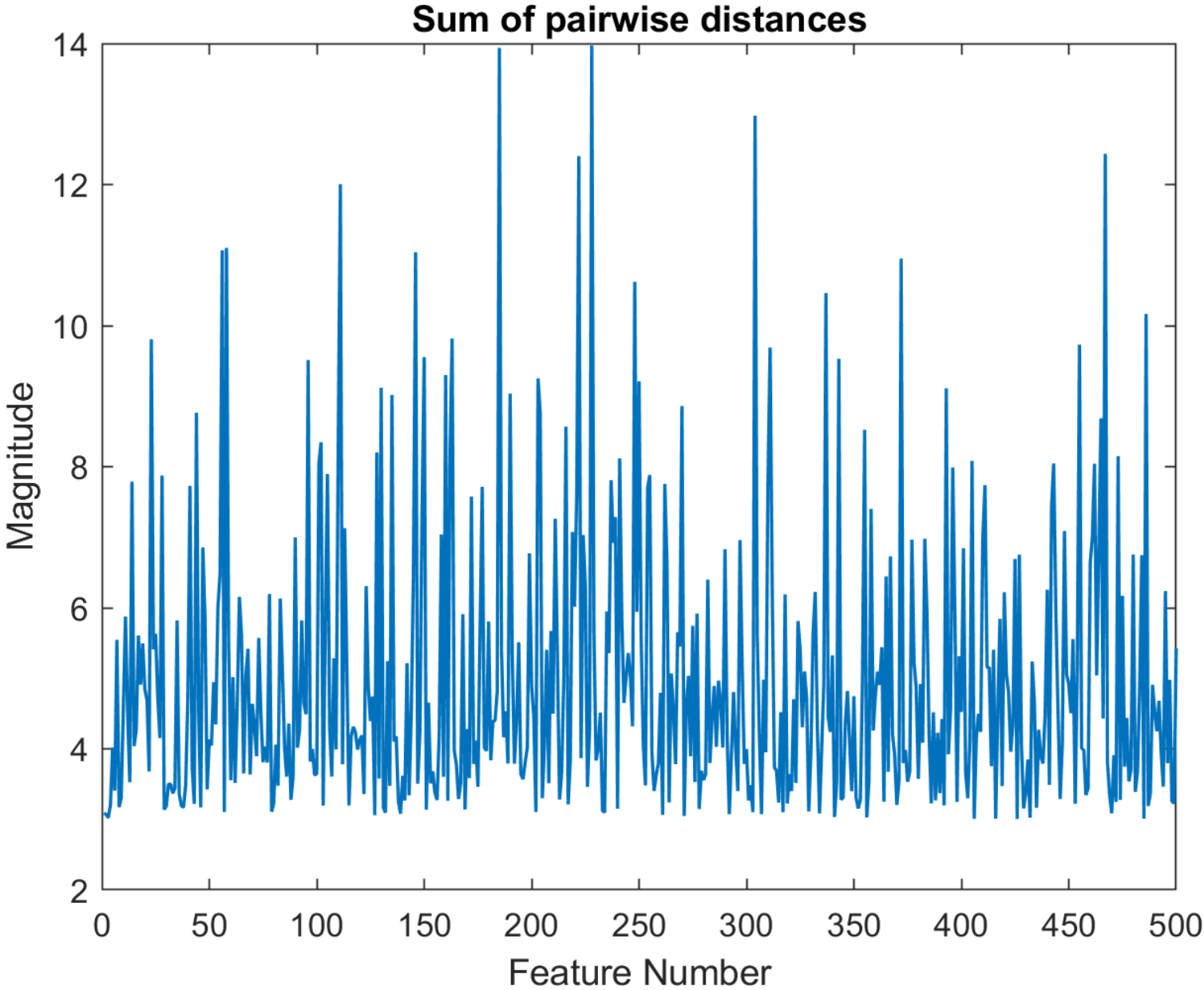}}\hfill
	\subfloat[]{\includegraphics[width=0.33\textwidth, height=0.2\textheight]{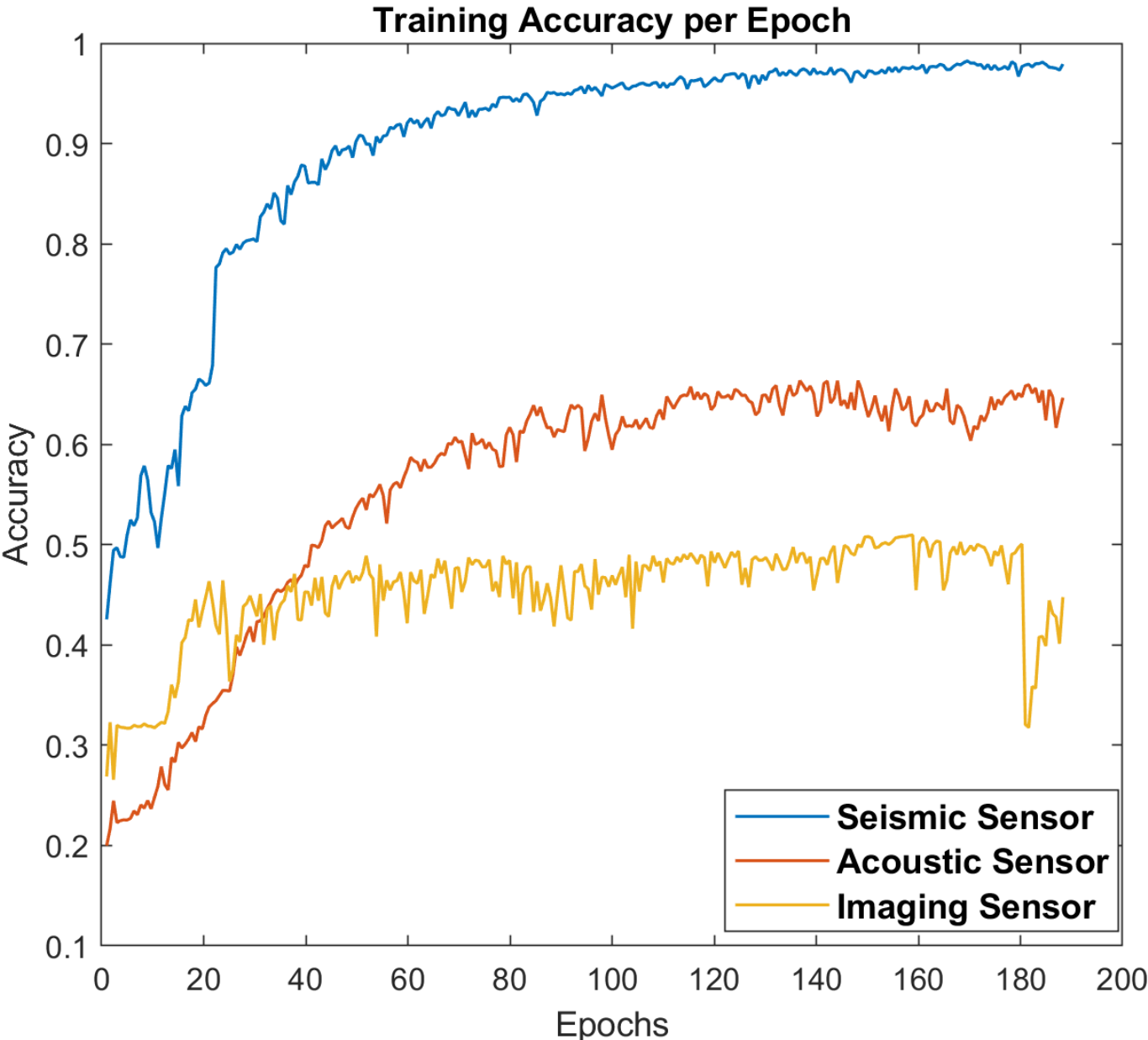}}\hfill
	\caption{(a): Optimization Losses, (b): Sum of pairwise distances between hidden estimates, (c): Training Accuracies/Epoch  when GAN Loss is optimized in tandem with classification loss}
	\label{GAN_only}
\end{figure*}
\begin{figure*}[tbp]
	\centering
	\subfloat[]{\includegraphics[width=0.33\textwidth]{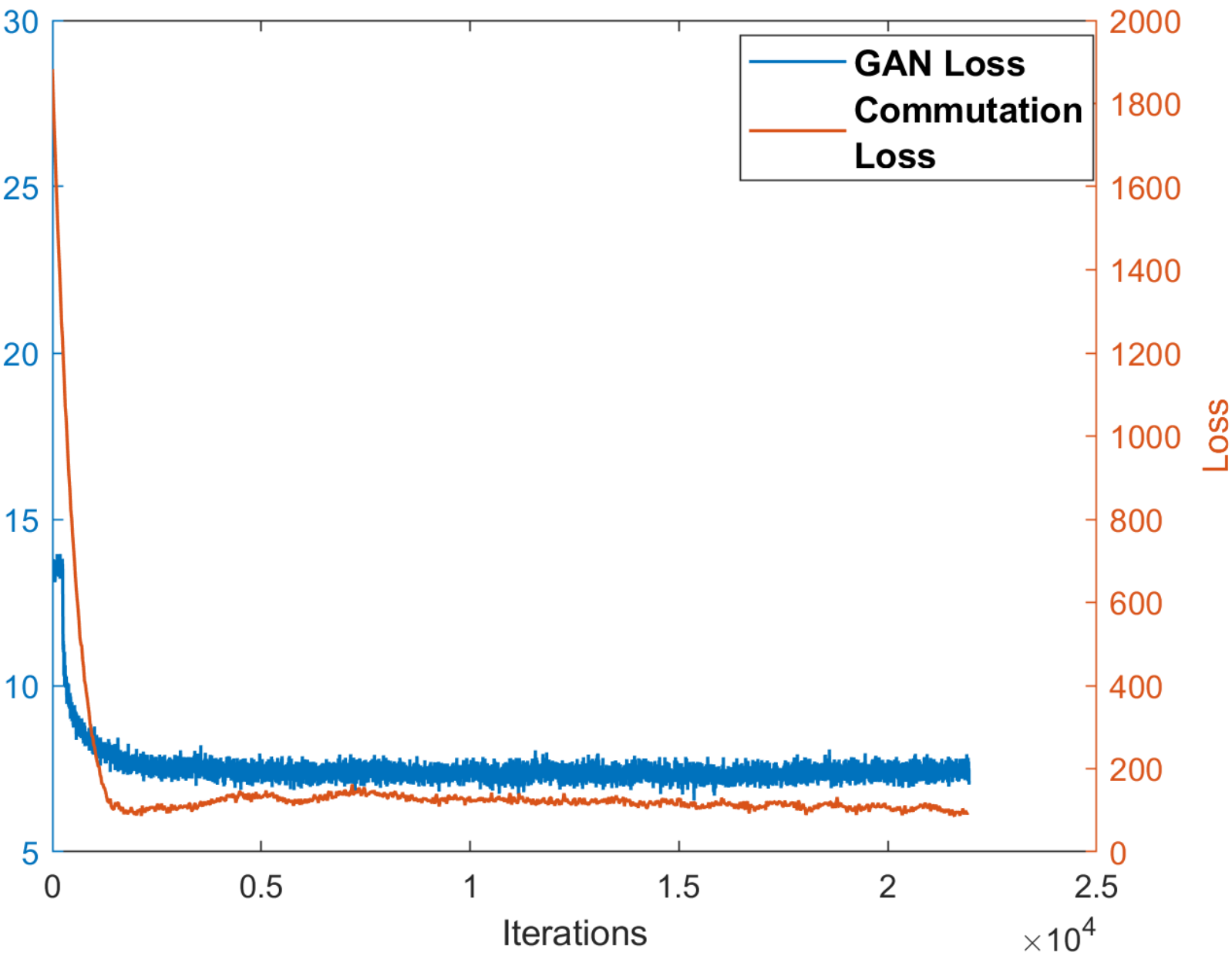}}\hfill
	\subfloat[]{\includegraphics[width=0.33\textwidth]{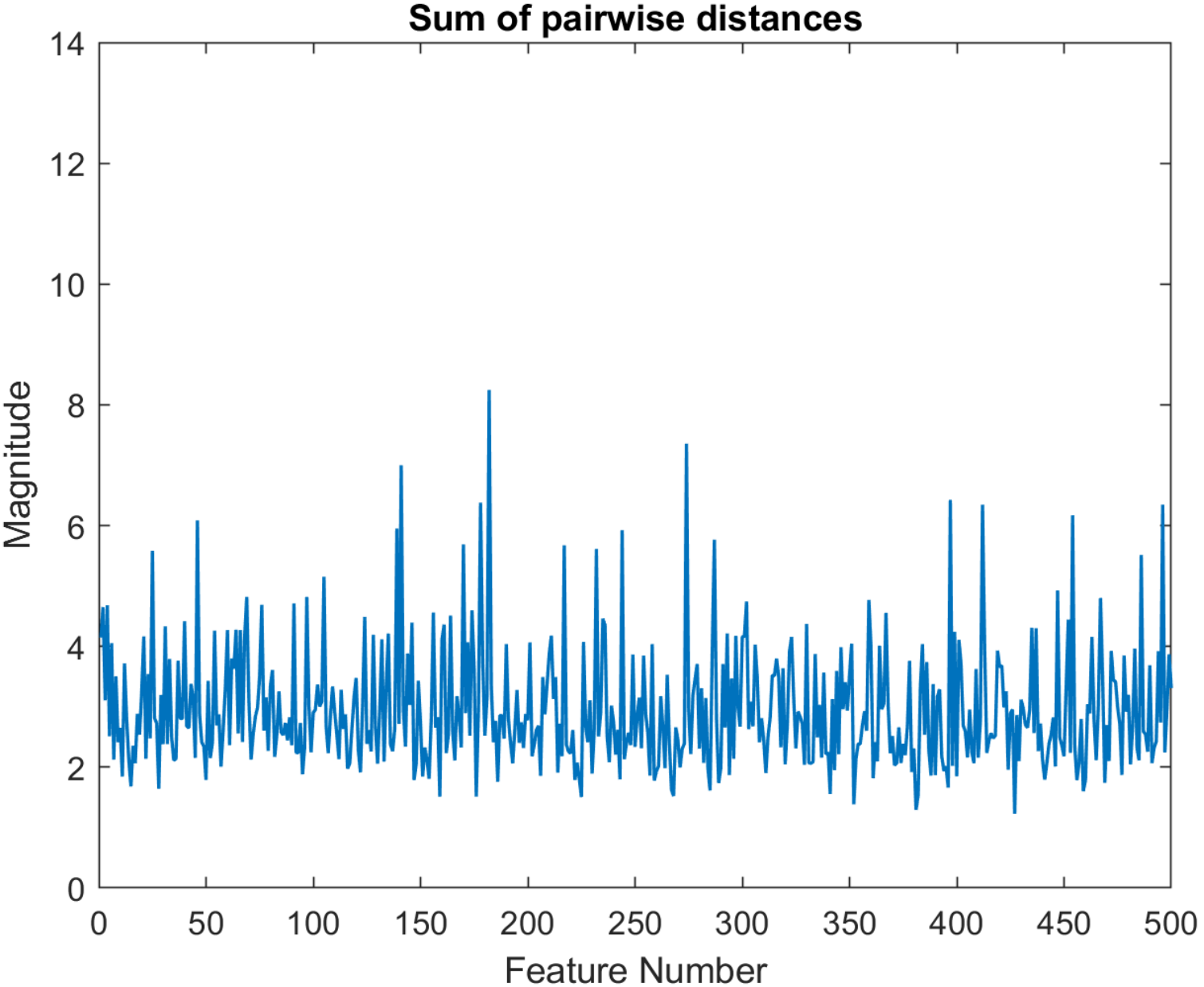}}\hfill
		\subfloat[]{\includegraphics[width=0.33\textwidth, height=0.2\textheight]{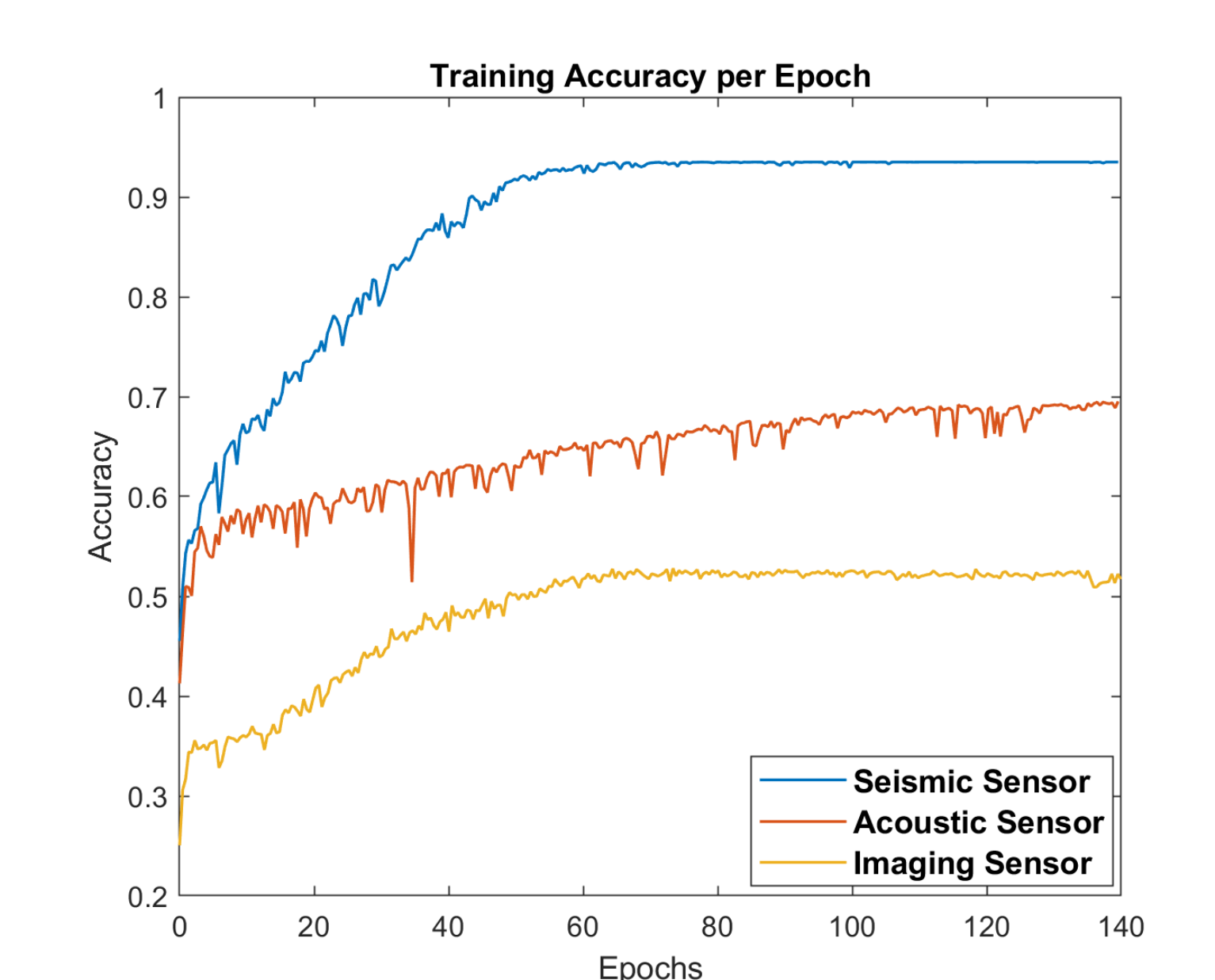}}\hfill
	\caption{(a): Optimization Losses, (b): Sum of pairwise distances between hidden estimates, (c): Training Accuracies/Epoch when commutation term is included with the GAN and classification term}
	\label{GAN+Comm}
\end{figure*}
\begin{figure*}[tbp]
	\centering
	\subfloat[]{\includegraphics[width=0.33\textwidth]{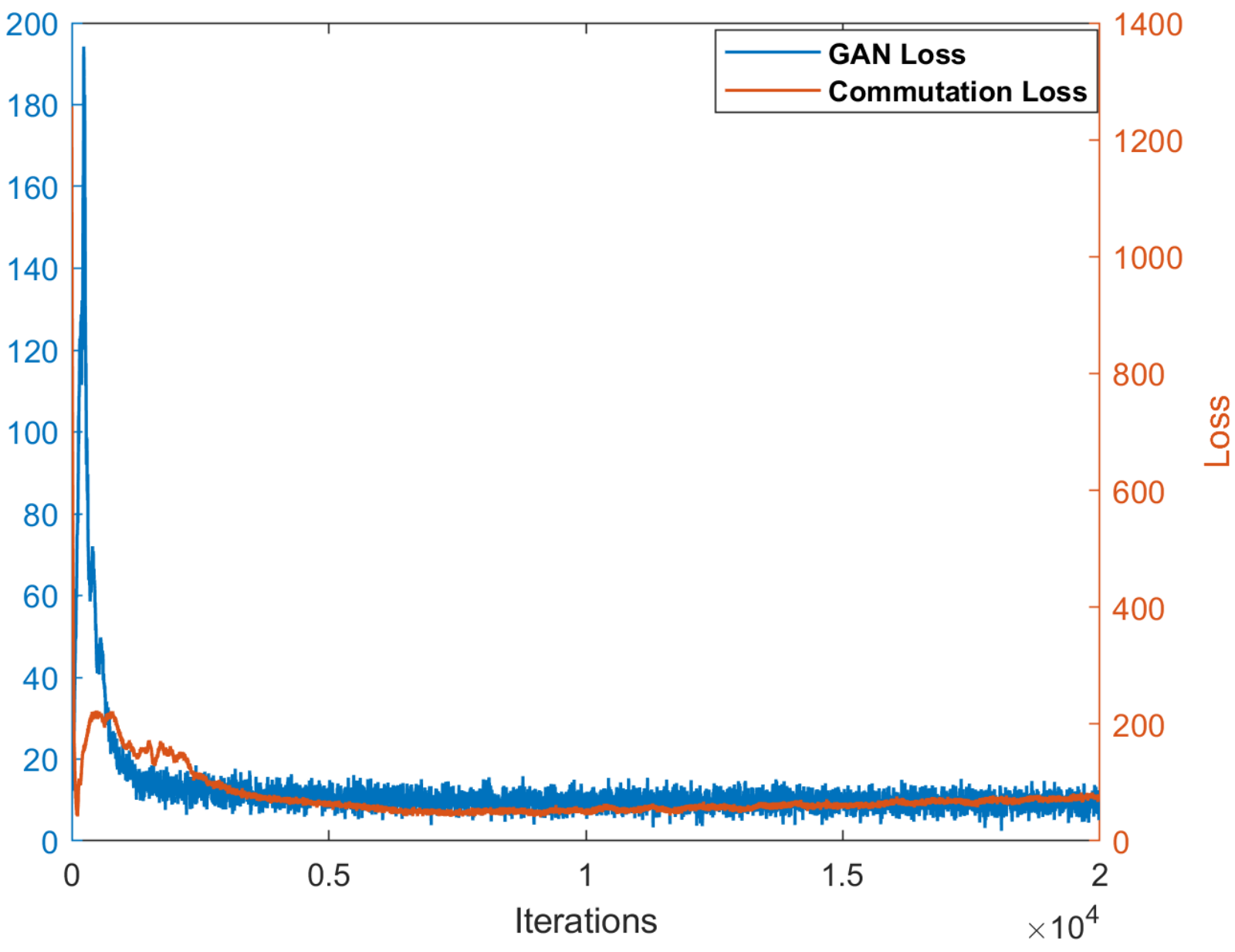}}\hfill
	\subfloat[]{\includegraphics[width=0.33\textwidth]{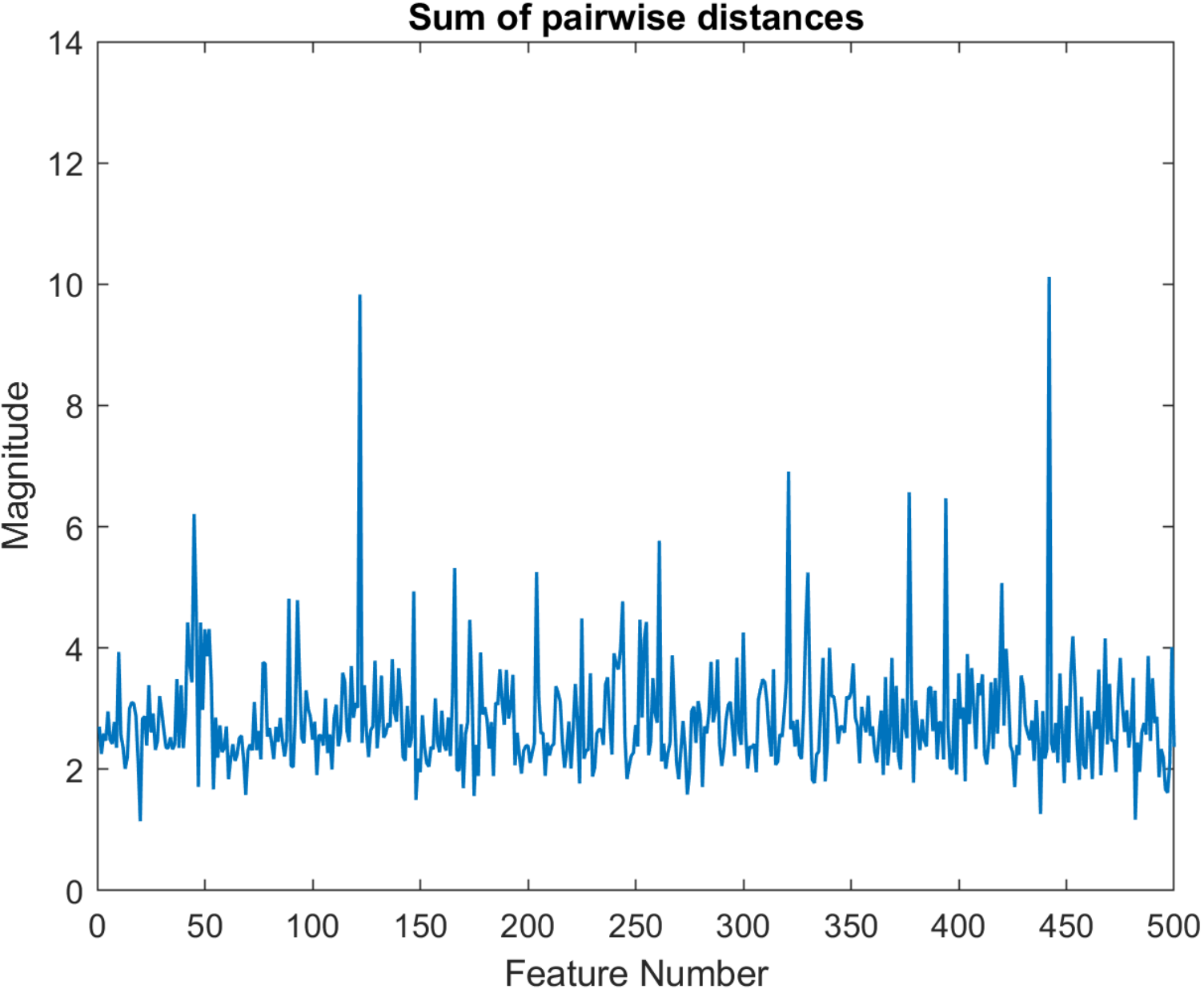}}\hfill
	\subfloat[]{\includegraphics[width=0.33\textwidth, height=0.2\textheight]{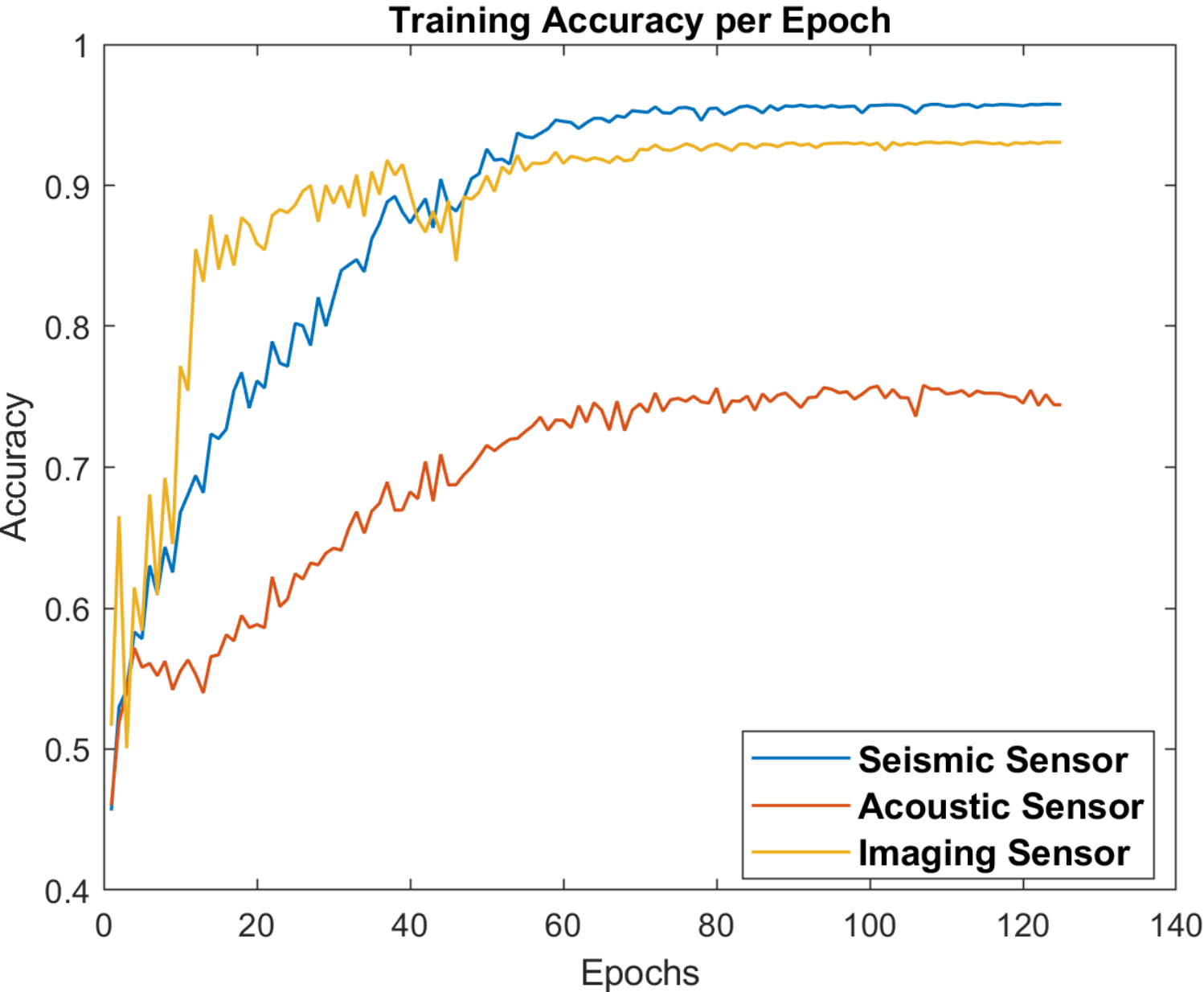}}\hfill
	\caption{(a): Optimization Losses, (b): Sum of pairwise distances between hidden estimates, (c): Training Accuracies/Epoch when all terms in Equation \ref{Final_Obj} are active.}
	\label{GAN+Comm+Sel}
\end{figure*}
\begin{figure*}[tbp]
	\centering
	\subfloat[Gaussian noise is added to Seismic and Acoustic Sensor, while Imaging data is assumed to be clean.]{\includegraphics[width=0.32\textwidth]{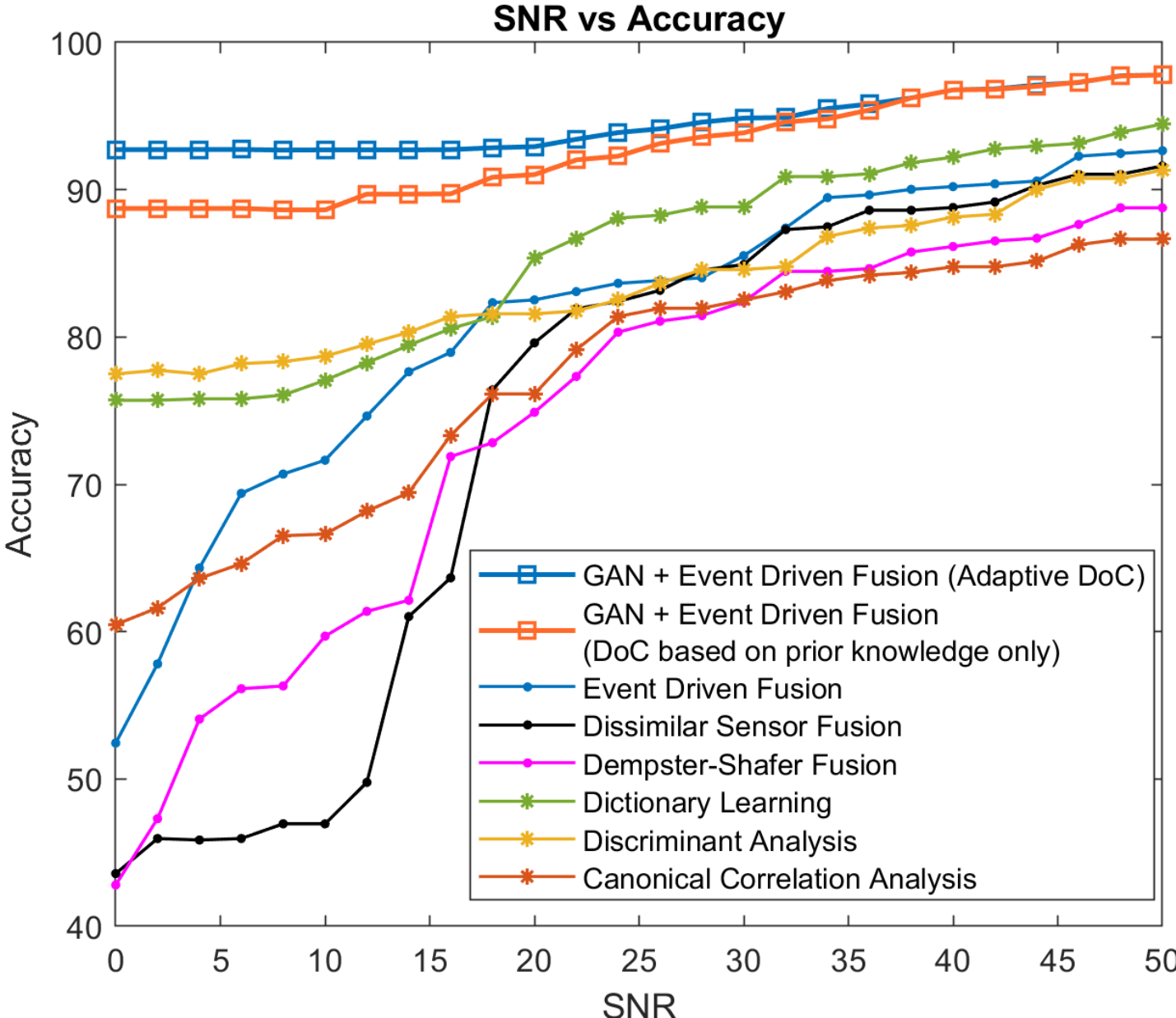}} \hfill
	\subfloat[Gaussian noise is added to Imaging and Acoustic Sensor, while Seismic data is assumed to be clean.]{\includegraphics[width=0.32\textwidth]{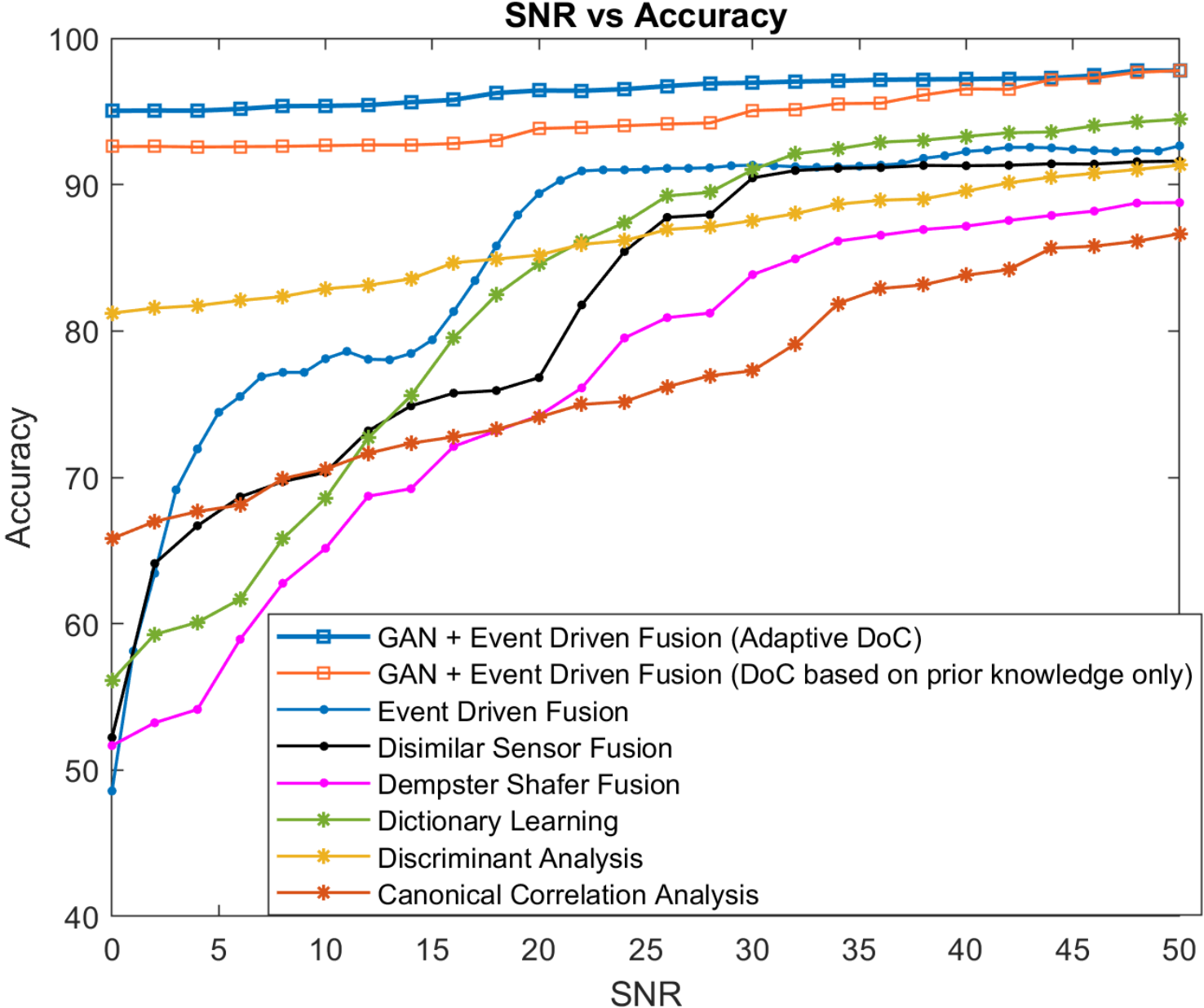}} \hfill
	\subfloat[Gaussian noise is added to Seismic and Imaging Sensor, while Acoustic data is assumed to be clean.]{\includegraphics[width=0.32\textwidth]{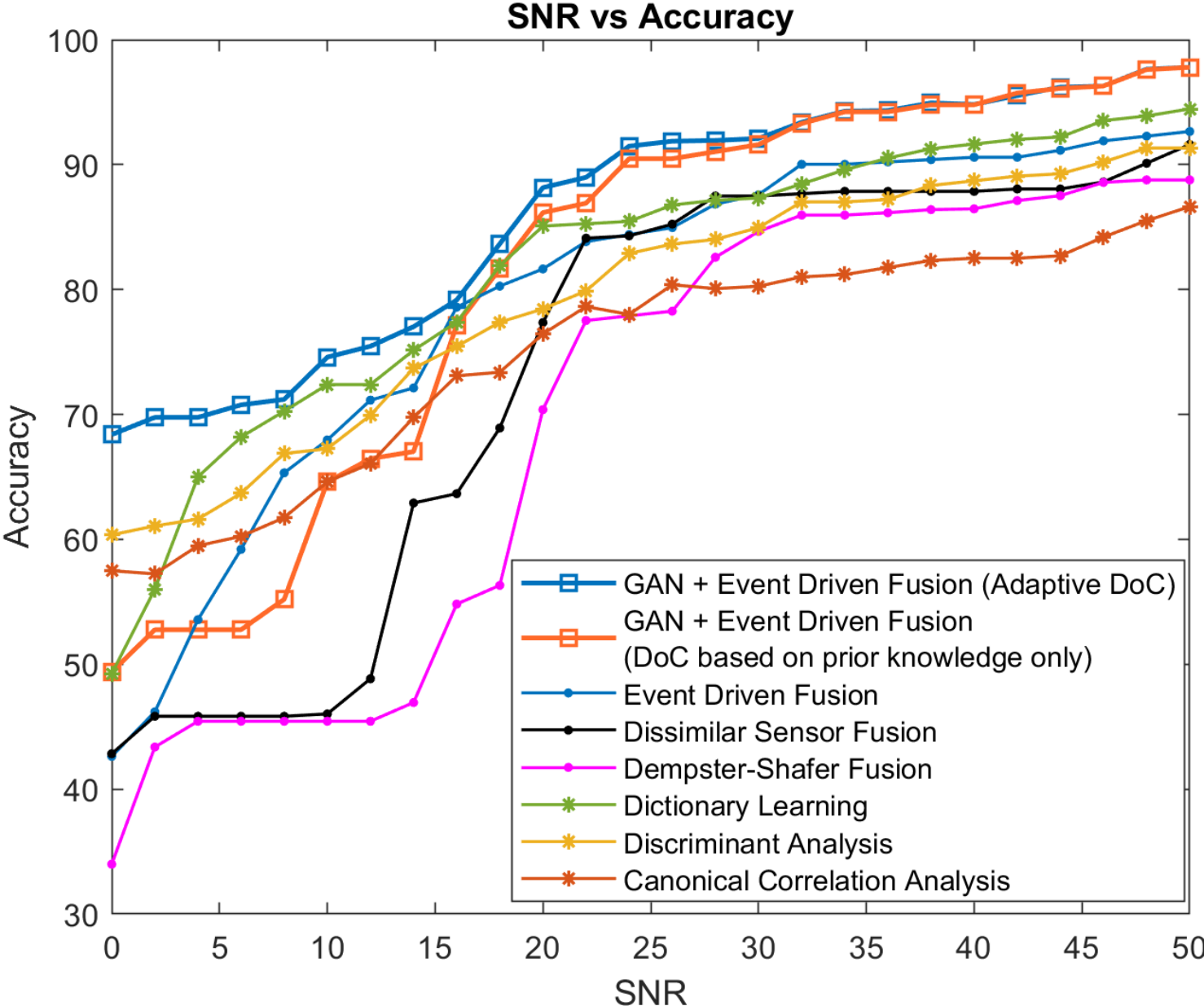}}
	\caption{Comparison of the proposed technique (GAN + Event Driven Fusion) with existing techniques for Dataset-1.}
	\label{SNRvsAcc_Army}
\end{figure*}
\begin{figure*}[tbp]
	\centering
	\subfloat[Gaussian noise is added to the Telescopic Imaging Sensor, while Radar data is assumed to be clean.]{\includegraphics[width=0.4\textwidth]{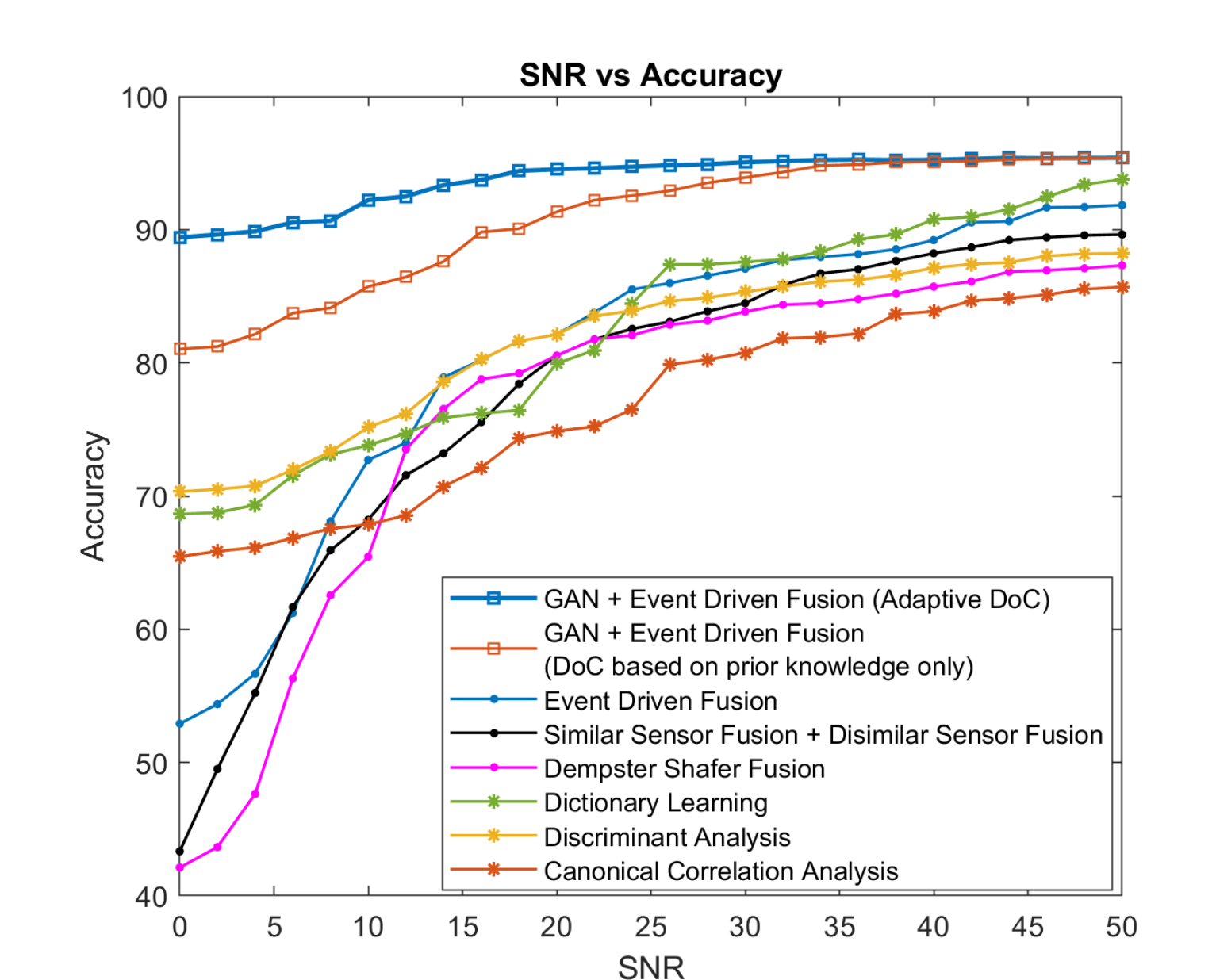}} \hfill
	\subfloat[Gaussian noise is added to the Radar Sensors, while Telescopic Imaging data is assumed to be clean.]{\includegraphics[width=0.4\textwidth]{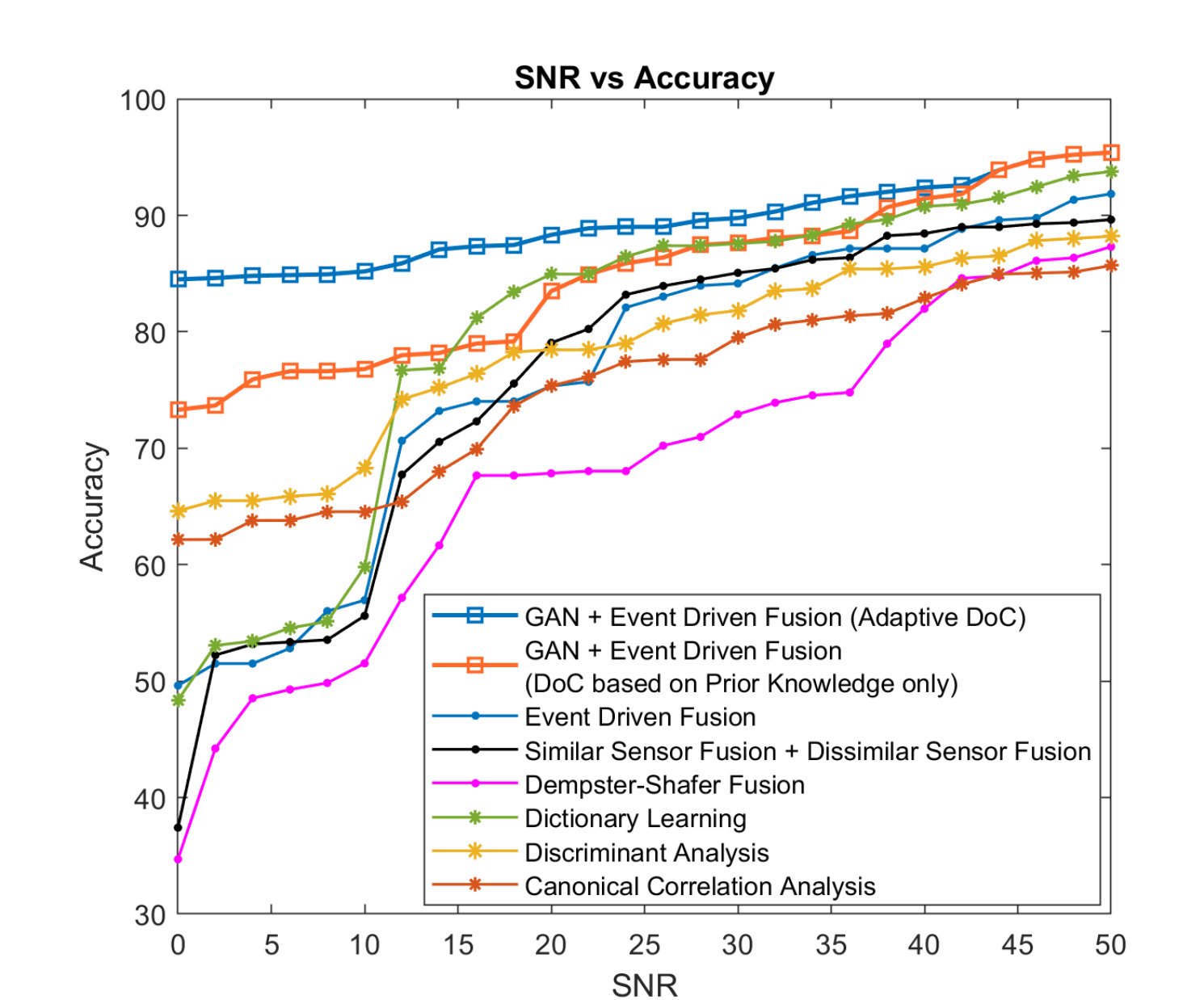}}
	\caption{Comparison of the proposed technique (GAN + Event Driven Fusion) with existing techniques for Dataset-2.}
	\label{PerfvsSNR_MDA}
\end{figure*}

Table \ref{acctable_ARL} shows the performance of different techniques as compared with the proposed approach of using a Generative Adversarial Network to learn optimal features, which are then used for target classification. All sensors are assumed to be working normally in Table \ref{acctable_ARL}. Comparisons are carried out with approaches that perform fusion at the decision level, as well as those that seek a hidden space for fusion. We find that using the individual sensor features (whose learning is driven by the existence of a structured hidden space) for classification and performing decision level fusion on these classifications yields better performance relative to those based solely on the hidden space. For the evaluation of Model Based Fusion approaches (see Section \ref{Model_Based_Fusion}), the dissimilar sensor setting is considered for Dataset-1 as all three sensors are different. On the other hand, for Dataset-2, the two radars are first combined using similar sensor fusion and the result of this is combined with the telescopic sensor using dissimilar sensor fusion.

We also compare the effects of different losses in our objective. 
First we implement a system which uses an Adversarial setup, along with the classification losses, i.e., $\gamma_1$, and $\gamma_2$ in Equation \ref{Final_Obj} are set to $0$. The losses are seen in Figure \ref{GAN_only}-(a). The blue plot represents the negative of the discriminator loss for the GAN network (y-axis on left), while the orange plot corresponds to the commutation loss (y-axis on right). Figure \ref{GAN_only}-(b) shows the sum of pairwise distances between the hidden estimates generated by the three sensors, i.e. $\text{Magnitude}(k) = \sum_{l,m=1}^{L}(\bm{H^l}(k)-\bm{H^m}(k))^2$, where, $k \in \{1,...,d_H\}$, refers to the feature number in the $d_H$-dimensional hidden space. Figure \ref{GAN_only}-(c) shows the individual classification performance of the sensors. It is observed that the performance of Seismic and Acoustic Sensors improves as the model is updated, but the performance of the imaging sensor is very low. 

Figure \ref{GAN+Comm} shows the optimization losses when the commutation term is added to the objective, i.e. only $\gamma_1$ is set to zero in Equation \ref{Final_Obj}. This amounts to minimizing of the pairwise commutation loss, hence forcing the hidden space estimates to lie in a common subspace. It is seen that including the commutation cost leads to closer hidden spaces (Figure \ref{GAN+Comm}-(b)).The performance of the imaging sensor, however, does not significantly increase. 

Figure \ref{GAN+Comm+Sel} shows the optimization loss when all the terms in Equation \ref{Final_Obj} are active. The imaging sensor now starts giving better performance. Since the $L_{\infty,1}$ norm on the selection matrix allows representation of private/shared features in the hidden space, the heterogeneity of the imaging sensor is maintained, and selected features are more optimal for classification based on the imaging sensor. The effects of the $L_{\infty,1}$ norm can also be seen in the differences between the hidden estimates in Figure \ref{GAN+Comm+Sel}-(b), where some features now exhibit more distinguishing diversity compared to others, as they may correspond to private features. 

\subsection{Robustness Analysis}
\label{RA_ARL}
The major advantage of learning a hidden space between the modalities is the ability to detect sensor damage in operation, and to generate representative features for that damaged sensor as seen in Equation \ref{Reconstruction}. The contribution of the representative features towards the fused decision can also be controlled via the degree of confidence. 

We also study how the performance of the system varies with different Signal to Noise Ratios. That is noise is selectively added to some sensors while another is normally functioning. 

Figures \ref{SNRvsAcc_Army} and \ref{PerfvsSNR_MDA} show the degradation of the fusion performance as the SNR decreases for Dataset-1 and Dataset-2, respectively. The plots with `asterisk'(*) markers represent approaches that search for a common subspace in order to fuse the different modalities, while those with `dot'(.) markers represent approaches that perform fusion at the decision level. Finally, the plots with `square' markers show the performance of our proposed approach. The blue plot uses adaptive DoC $\left(DoC^l_t = (1-p_D(\hat{h}_t^l)).Acc_{train}^l\right)$, while the orange plot only uses the prior information $\left(DoC^l_t = Acc_{train}^l\right)$, which is also the case for other plots using decision level fusion. It is observed that it is important to update the DoC during the implementation based on the sensor condition, rather than only depending on the prior information about the discriminative abilities of the sensor. 

Furthermore, the above graphs show that the degradation has severe effects in the case where the seismic and imaging sensors are damaged. This is due to the fact that the discriminative power of the acoustic sensor is low, and in spite of adapting the DoC and generating representative features, the performance is limited by the information contained in the observations of the sensor. A similar trend is also observed for Dataset-2, where the telescopic imagery has lower discriminative ability compared to the radar sensors.

\section{Conclusion}
In this work, we proposed a data driven approach to learn a structured hidden space between sensors by way of a bank of Generative Adversarial Networks. The maps into the hidden space are forced to commute with each other, leading to the estimates from different modalities to lie in a common subspace. Enforcing commutation is also shown to lead to faster convergence and better hidden space estimates.
The hidden space is subsequently used to learn the features of a target of interest for classification. The `private' and `shared' information in a hidden space was further enhanced by including a selection matrix that selects features of interest for each modality before classification. The hidden space serves a dual purpose of detecting noisy/damaged sensors and mitigating sensor losses by ensuring a graceful performance degradation. Experiments on multiple datasets show that the proposed approach secures a great degree of robustness to noisy/damaged sensors, and outperforms existing fusion algorithms. 


%

\ifCLASSOPTIONcaptionsoff
  \newpage
\fi

\newpage
\section*{Appendix}
\section*{Algorithms}

\begin{algorithm}
	\label{CGAN_Update}
	\textbf{Algorithm 1: } Training the CGAN system \newline
	Let $\{\theta_{g^l}^q\}_{q \in \{1,...,Q\}}$ be the parameters for the $q^{th}$ layer of the $l^{th}$ generator, and $\bm{y_q}$ be the output of that layer. \newline $\bm{y^Q} = \bm{Z^l}[\bm{M^l}(\bm{X^l})]$, and,  $\bm{y^{Q-1}} = \bm{M^l}(\bm{X^l})$.\newline
	Similarly, Let $\{\theta_d^r\}_{r \in \{ 1,...,R \}}$ be the parameters of the $r^{th}$ layer of the discriminator and $\bm{z^r}$ be the output of that layer.
	\begin{itemize}
		\item for $j$ in $1:\text{Number of Iterations}$
		\begin{itemize}
			\item for $l$ in $1:L$
			\begin{itemize}
				\item\small{Update the discriminator network,}
				\begingroup\makeatletter\def\f@size{6}\check@mathfonts
				\begin{equation}
				\begin{split}
					\theta_d^{r^{(j)}} \leftarrow \theta_d^{r^{(j-1)}} + \mu_D \left\{ \frac{d\mathcal{L}(\bm{G^l}, \bm{D}, \bm{Z^l}, \bm{S^l}, \bm{C^l})}{d\bm{z^R}}\frac{d\bm{z^R}}{d\bm{z^{R-1}}}  ...\frac{d\bm{z^{r+1}}}{d\theta_d^r} \right\}
				\end{split}
				\end{equation}
				\endgroup
				
				\item\small{Update the $l^{th}$ generator network,}
				\begingroup\makeatletter\def\f@size{7}\check@mathfonts
				\begin{equation}
				\bm{C^{l^{(j)}}} \leftarrow  \bm{C^{l^{(j-1)}}} - \mu_G \left\{  \frac{d \mathcal{L}(\bm{G^l}, \bm{D}, \bm{Z^l}, \bm{S^l}, \bm{C^l})}{d\bm{C^l}} \right\}
				\end{equation}
				\endgroup
				\begingroup\makeatletter\def\f@size{6.5}\check@mathfonts
				\begin{equation}
				\begin{split}
					\bm{S^{l^{(j)}}} \leftarrow \bm{S^{l^{(j-1)}}} - \mu_G & \left\{ \frac{d \mathcal{L}(\bm{G^l}, \bm{D}, \bm{Z^l}, \bm{S^l}, \bm{C^l})}{d\bm{C^l}(\bm{F^l})} \frac{d\bm{C^l}(\bm{F^l})}{d\bm{F^l}} \frac{d\bm{F^l}}{\bm{S^l}} \right. \\ 
					& \left. 
					+ \frac{d \mathcal{L}(\bm{G^l}, \bm{D}, \bm{Z^l}, \bm{S^l}, \bm{C^l})}{d\bm{S^l}} \right\}
				\end{split}
				\end{equation}
				\endgroup
				\begingroup\makeatletter\def\f@size{6}\check@mathfonts
				\begin{equation}
				\begin{split}
				\bm{Z^{l^{(j)}}} \leftarrow \bm{Z^{l^{(j-1)}}} - \mu_G & \left\{ \frac{d \mathcal{L}(\bm{G^l}, \bm{D}, \bm{Z^l}, \bm{S^l}, \bm{C^l})}{d\bm{C^l}(\bm{F^l})}  \right. \\ 
				&\frac{d\bm{S^l}(\bm{Z^l}[\bm{M^l}(\bm{X^l})])}{d\bm{Z^l}[\bm{M^l}(\bm{X^l})]} \frac{d\bm{C^l}(\bm{F^l})}{d\bm{F^l}} \frac{d\bm{Z^l}[\bm{M^l}(\bm{X^l})]}{d\bm{Z^l}} \\
				& +
				\frac{d\mathcal{L}(\bm{G^l}, \bm{D}, \bm{Z^l}, \bm{S^l}, \bm{C^l})}{d\bm{G^l}(\bm{X^l})} \frac{d\bm{G^l}(\bm{X^l})}{d\bm{Z^l}} \\ 
				& \left. +
				\frac{d\mathcal{L}(\bm{G^l}, \bm{D}, \bm{Z^l}, \bm{S^l}, \bm{C^l})}{d\bm{Z^l}} \right\}
				\end{split}
				\end{equation}
				\endgroup
				\begingroup\makeatletter\def\f@size{6}\check@mathfonts
				\begin{equation}
				\begin{split}
				\theta_{g^l}^{q^{(j)}} \leftarrow \theta_{g^l}^{q^{(j-1)}} - \mu_G &\left\{ \frac{d\mathcal{L}(\bm{G^l}, \bm{D}, \bm{Z^l}, \bm{S^l}, \bm{C^l})}{d\bm{C^l}(\bm{F^l})} \frac{d\bm{C^l}(\bm{F^l})}{d\bm{F^l}} \right. \\  & \frac{d\bm{S^l}(\bm{G^l}(\bm{X^l}))}{d\bm{G^l}(\bm{X^l})} \frac{d\bm{G^l}(\bm{X^l})}{d\bm{y^{Q-1}}} \\ 
				& \left. + \frac{d\mathcal{L}(\bm{G^l}, \bm{D}, \bm{Z^l}, \bm{S^l}, \bm{C^l})}{d\bm{G^l}(\bm{X^l})} \frac{d\bm{G^l}(\bm{X^l})}{d\bm{y^{Q-1}}} \right\} 	\\
				&\left\{\frac{d\bm{y^{Q-1}}}{\bm{dy^{Q-2}}}...\frac{d\bm{y^{q+1}}}{d\theta_{g^l}^q}\right\}
				\end{split}
				\end{equation}
				\endgroup
				
			\end{itemize}
		\end{itemize}
	\end{itemize}
\end{algorithm}

\begin{algorithm}
	\textbf{Algorithm 2: } \small{Hierarchical Clustering (Agglomerative Clustering)}
	\begin{itemize}
		\item Initialize clusters at $v = 0$: $\mathcal{C}_0 = \{ Z_0^j = \{h_j\}, j = 1...L.N \}$
		\item while $\text{Number of Clusters} > 1$:
		\item \quad v = v + 1
		\item \quad Among all cluster pairs, $\{Z_{v-1}^r, Z_{v-1^s}\}$, find the one, say $\{Z_{v-1}^i, Z_{v-1}^j\}$, such that:
		\begin{equation}
		d(Z_{v-1}^i, Z_{v-1}^j) = \min_{r,s} d(Z_{v-1}^r, Z_{v-1}^s),
		\end{equation}  
		where, $d(.)$ is a measure of dissimilarity. 
		\item \quad Assign $Z_v^q = Z_{v-1}^i \cup Z_{v-1}^j$
		\begingroup\makeatletter\def\f@size{9}\check@mathfonts
		\item \quad Get new clustering: $\mathcal{C}_v = \{ (\mathcal{C}_{v-1} - \{ Z_{v-1}^i,Z_{v-1}^j \}) \cup Z_v^q \}$
		\endgroup 
	\end{itemize}
\end{algorithm}




%
\newpage

%
\begin{IEEEbiography}[{\includegraphics[width=1in,height=1.15in,clip,keepaspectratio]{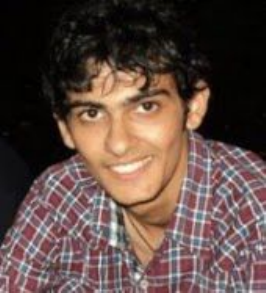}}]{Siddharth Roheda}
	\scriptsize received the B.Tech degree in Electronics and Communication from Nirma University, India, in 2015 and completed his M.Sc. and Ph.D. degree with the Department of Electrical and Computer Engineering at North Carolina State University, Raleigh, NC, USA in 2020. His current research interests include Information Fusion, Machine Learning, Deep Learning, Computer Vision, and Signal Processing. 
\end{IEEEbiography}

\begin{IEEEbiography}[{\includegraphics[width=1in,height=1.15in,clip,keepaspectratio]{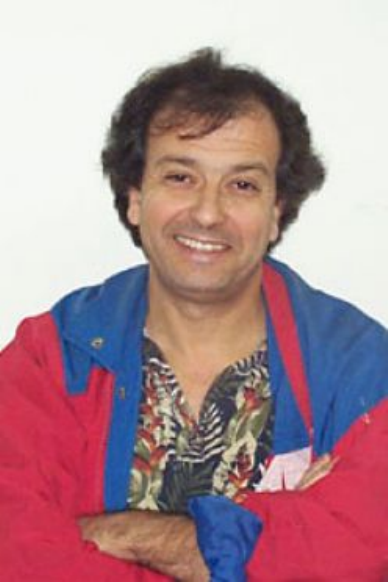}}]{Hamid Krim}
	\scriptsize received the B.Sc. and M.Sc. and Ph.D. in ECE. He was
	a Member of Technical Staff at AT\&T Bell Labs,
	where he has conducted research and development
	in the areas of telephony and digital communication systems/subsystems. Following an NSF Postdoctoral Fellowship at Foreign Centers of Excellence,
	LSS/University of Orsay, Paris, France, he joined the
	Laboratory for Information and Decision Systems, MIT, Cambridge, MA, USA, as a Research Scientist and where he performed/supervised research.
	He is currently a Professor of electrical engineering in the Department of
	Electrical and Computer Engineering, North Carolina State University,
	NC, leading the Vision, Information, and Statistical Signal Theories and
	Applications Group. His research interests include statistical signal and image
	analysis and mathematical modeling with a keen emphasis on applied problems
	in classification and recognition using geometric and topological tools. He has
	served on the SP society Editorial Board and on TCs, and is the SP Distinguished
	Lecturer for 2015-2016.
\end{IEEEbiography}
\begin{IEEEbiography}[{\includegraphics[width=1in,height=1.15in,clip,keepaspectratio]{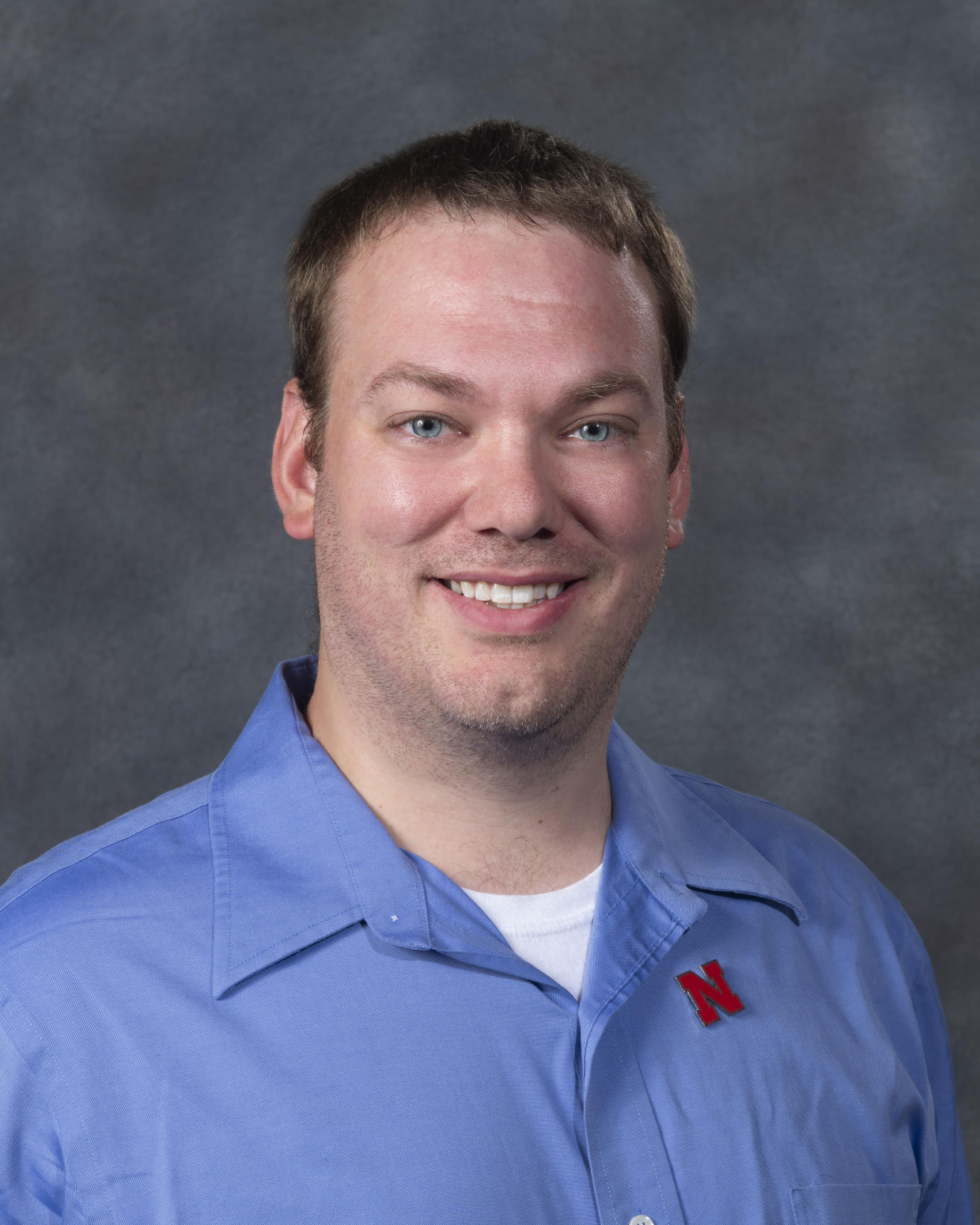}}]{Benjamin S. Riggan}
	\scriptsize received the B.S. degree in computer engineering and the M.S. and Ph.D. degrees in Electrical Engineering from North Carolina State University in 2009, 2011, and 2014, respectively. Currently, he is an assistant professor in the Department of Electrical and Computer Engineering at the University of Nebraska-Lincoln.  Prior to joining the Electrical and Computer Engineering Department at UNL, he worked for the U.S. Army Research Laboratory’s Image Processing and Networked Sensing and Fusion Branches, focusing on cross-domain facial recognition and multi-modal analytics. His research interests are in the areas of computer vision, image and signal processing, and biometrics/forensics that are related to domain adaptation, multi-modal analytics, and machine learning. He received a best paper award at the IEEE Winter Conference on Applications of Computer Vision (WACV) in 2016, a runner-up best paper award at the IEEE International Conference on Biometrics: Theory, Applications, and Systems (BTAS) in 2016, and a best paper award at IEEE WACV in 2018. He has also organized workshops and tutorials at the IEEE International Conference on Automatic Face and Gesture Recognition (FG) in 2017, IEEE/IAPR Joint Conference on Biometrics (IJCB) in 2017,  IEEE WACV in 2018 and 2019, and BTAS 2019 on topics related to heterogeneous facial recognition and cross-domain biometric recognition.
\end{IEEEbiography}

\end{document}